\DeclarePairedDelimiterX{\KLbase}[2]{(}{)}{%
  #1\;\delimsize\|\;#2%
}
\newcommand{\KL}{D\KLbase}
\theoremstyle{plain}
\newtheorem{theorem}{Theorem}[section]
\newtheorem{lemma}[theorem]{Lemma}
\theoremstyle{definition}
\newtheorem{definition}[theorem]{Definition}
\theoremstyle{remark}
\newcommand{\supertitle}{Better \& Faster Large Language Models via Multi-token Prediction}
\icmltitlerunning{\supertitle{}}
\DeclareMathOperator*{\Var}{Var}
\DeclareMathOperator*{\E}{\mathbb{E}}
\begin{document}

\twocolumn[
\icmltitle{\supertitle{}}

\icmlsetsymbol{equal}{*}
\icmlsetsymbol{adv}{+}

\begin{icmlauthorlist}
\icmlauthor{Fabian Gloeckle}{equal,fair,ponts}
\icmlauthor{Badr Youbi Idrissi}{equal,fair,upsaclay}
\icmlauthor{Baptiste Rozière}{fair}
\icmlauthor{David Lopez-Paz}{adv,fair}
\icmlauthor{Gabriel Synnaeve}{adv,fair}
\end{icmlauthorlist}

\icmlaffiliation{fair}{FAIR at Meta}
\icmlaffiliation{ponts}{CERMICS Ecole des Ponts ParisTech}
\icmlaffiliation{upsaclay}{LISN Université Paris-Saclay}

\icmlcorrespondingauthor{Fabian Gloeckle}{fgloeckle@meta.com}
\icmlcorrespondingauthor{Badr Youbi Idrissi}{byoubi@meta.com}

\icmlkeywords{Machine Learning, ICML}

\vskip 0.3in
]

\printAffiliationsAndNotice{\icmlEqualContribution} %

\begin{abstract}
Large language models such as GPT and Llama are trained with a next-token prediction loss.
In this work, we suggest that training language models to predict \emph{multiple} future tokens at once results in higher sample efficiency.
More specifically, at each position in the training corpus, we ask the model to predict the following $n$ tokens using $n$ independent output heads, operating on top of a shared model trunk. 
Considering multi-token prediction as an auxiliary training task, we measure improved downstream capabilities with no overhead in training time for both code and natural language models.
The method is increasingly useful for larger model sizes, and keeps its appeal when training for multiple epochs. Gains are especially pronounced on \emph{generative} benchmarks like coding, where our models consistently outperform strong baselines by several percentage points. Our 13B parameter models solves 12~\% more problems on HumanEval and 17~\% more on MBPP than comparable next-token models. Experiments on small algorithmic tasks demonstrate that multi-token prediction is favorable for the development of induction heads and algorithmic reasoning capabilities.
As an additional benefit, models trained with 4-token prediction are up to $3\times$ faster at inference, even with large batch sizes.
\end{abstract}

\section{Introduction}
Humanity has condensed its most ingenious undertakings, surprising findings and beautiful productions into text.
Large Language Models (LLMs) trained on all of these corpora are able to extract impressive amounts of world knowledge, as well as basic reasoning capabilities by implementing a simple---yet powerful---unsupervised learning task: next-token prediction.
Despite the recent wave of impressive achievements~\citep{openai2023gpt4}, next-token prediction remains an inefficient way of acquiring language, world knowledge and reasoning capabilities.
More precisely, teacher forcing with next-token prediction latches on local patterns and overlooks ``hard'' decisions. 
Consequently, it remains a fact that state-of-the-art next-token predictors call for orders of magnitude more data than human children to arrive at the same level of fluency~\citep{frank2023bridging}.

\vspace{0.2em}

\begin{figure}[ht!]

    \centering
    \includegraphics[width=\linewidth]{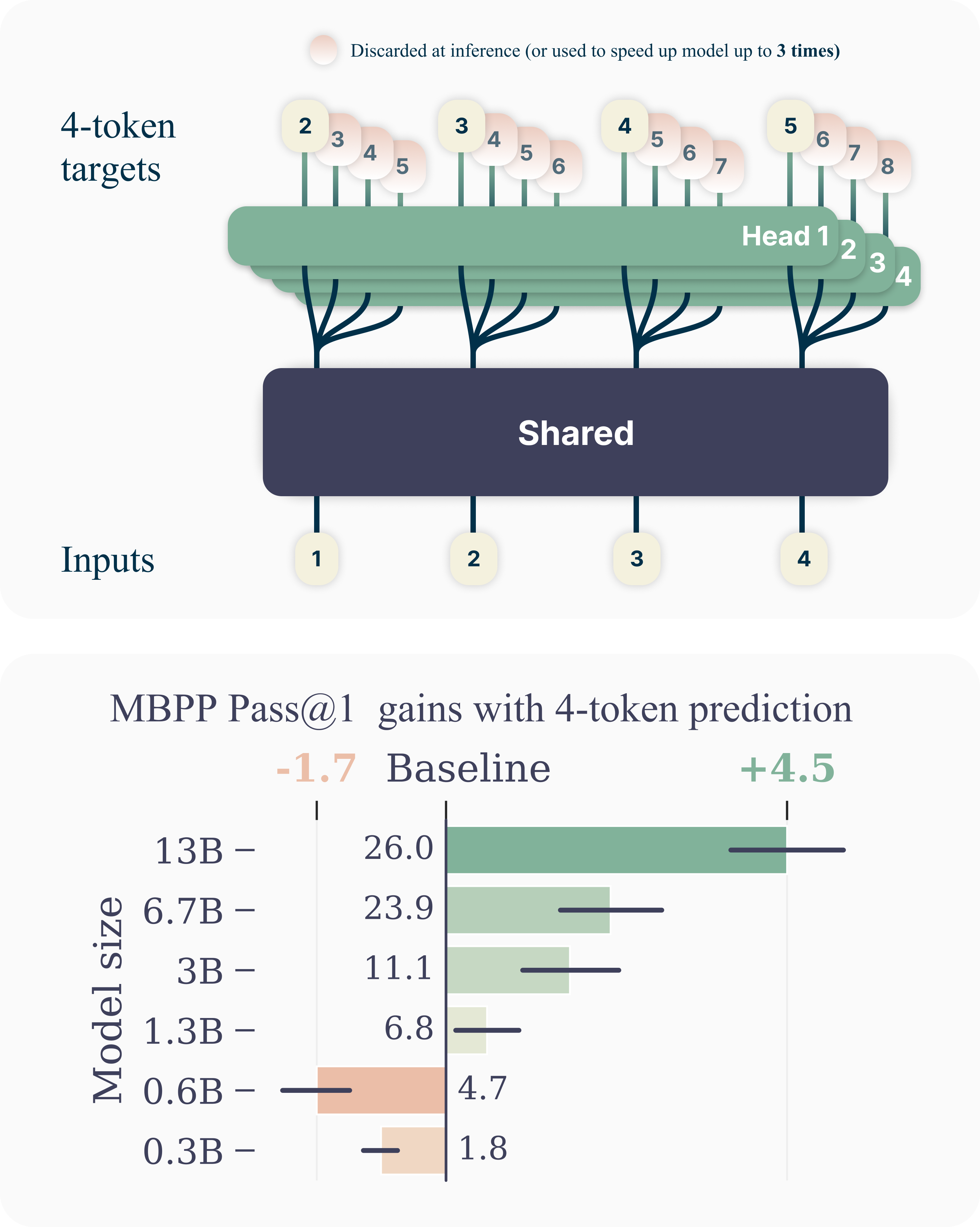}
    \vspace{0.2em}
    \caption{
    \textbf{Overview of multi-token prediction.}
    (Top) During training, the model predicts $4$ future tokens at once, by means of a shared trunk and $4$ dedicated output heads.
    During inference, we employ only the next-token output head.
    Optionally, the other three heads may be used to speed-up inference time.
    (Bottom) Multi-token prediction improves pass@1 on the MBPP code task, significantly so as model size increases.
    Error bars are confidence intervals of 90\% computed with bootstrapping over dataset samples.
    }
    \label{fig:main_figure_col}
\end{figure}

In this study, we argue that training LLMs to \emph{predict multiple tokens} at once will drive these models toward better sample efficiency.
As anticipated in~\cref{fig:main_figure_col}, multi-token prediction instructs the LLM to predict the $n$ future tokens from each position in the training corpora, all at once and in parallel~\citep{qi2020prophetnet}.

\paragraph{Contributions}
While multi-token prediction has been studied in previous literature~\citep{qi2020prophetnet}, the present work offers the following contributions:
\begin{enumerate}
  \item We propose a simple multi-token prediction architecture with no train time or memory overhead (Section~\ref{sect:method}).
  \item We provide experimental evidence that this training paradigm is beneficial at scale, with models up to 13B parameters solving around 15\% more code problems on average (Section~\ref{sect:results}).
  \item Multi-token prediction enables self-speculative decoding, making models up to 3 times faster at inference time across a wide range of batch-sizes (Section~\ref{sect:decoding}).
\end{enumerate}

While cost-free and simple, multi-token prediction is an effective modification to train stronger and faster transformer models.
We hope that our work spurs interest in novel auxiliary losses for LLMs well beyond next-token prediction, as to improve the performance, coherence, and reasoning abilities of these fascinating models.

\section{Method}
\label{sect:method}
Standard language modeling learns about a large text corpus $x_1, \ldots x_T$ by implementing a next-token prediction task.
Formally, the learning objective is to minimize the cross-entropy loss
\begin{align}
     L_1
     &= - \sum_t \log P_\theta(x_{t+1} \mid x_{t:1}),
\end{align}
where $P_\theta$ is our large language model under training, as to maximize the probability of $x_{t+1}$ as the next future token, given the history of past tokens $x_{t:1} = x_{t}, \ldots, x_{1}$.

In this work, we generalize the above by implementing a multi-token prediction task, where at each position of the training corpus, the model is instructed to predict $n$ future tokens at once.
This translates into the cross-entropy loss
\begin{equation}
    L_n = - \sum_t \log P_\theta(x_{t+n:t+1} \mid x_{t:1}).
\end{equation}
To make matters tractable, we assume that our large language model $P_\theta$ employs a shared trunk to produce a latent representation $z_{t:1}$ of the observed context $x_{t:1}$, then fed into $n$ independent heads to predict in parallel each of the $n$ future tokens (see~\cref{fig:main_figure_col}).
This leads to the following factorization of the multi-token prediction cross-entropy loss:
\begin{align*} \label{eq:loss-n}
    L_n
    &= - \sum_t \log P_\theta(x_{t+n:t+1} \mid z_{t:1}) \cdot P_\theta(z_{t:1} \mid x_{t:1}) \\
    &= - \sum_t \sum_{i=1}^{n} \log P_\theta(x_{t+i} \mid z_{t:1}) \cdot P_\theta(z_{t:1} \mid x_{t:1}).
\end{align*}

In practice, our architecture consists of a shared transformer trunk $f_s$ producing the hidden representation $z_{t:1}$ from the observed context $x_{t:1}$, $n$ independent output heads implemented in terms of transformer layers $f_{h_i}$, and a shared unembedding matrix $f_u$.
Therefore, to predict $n$ future tokens, we compute:
$$P_\theta(x_{t+i} \mid x_{t:1}) = \text{softmax}(f_u(f_{h_i}(f_s(x_{t:1})))),$$
for $i=1,\ldots n$, where, in particular, $P_\theta(x_{t+1} \mid x_{t:1})$ is our next-token prediction head.
See Appendix~\ref{app:architecture} for other variations of multi-token prediction architectures.

\paragraph{Memory-efficient implementation}
One big challenge in training multi-token predictors is reducing their GPU memory utilization.
To see why this is the case, recall that in current LLMs the vocabulary size $V$ is much larger than the dimension $d$ of the latent representation---therefore, logit vectors become the GPU memory usage bottleneck.
Naive implementations of multi-token predictors that materialize all logits and their gradients, both of shape $(n, V)$, severely limit the allowable batch-size and average GPU memory utilization.
Because of these reasons, in our architecture we propose to carefully adapt the sequence of forward and backward operations, as illustrated in Figure~\ref{fig:backward}.
In particular, after the forward pass through the shared trunk $f_s$, we sequentially compute the forward \emph{and} backward pass of each independent output head $f_i$, accumulating gradients at the trunk.
While this creates logits (and their gradients) for the output head $f_i$, these are freed before continuing to the next output head $f_{i+1}$, requiring the long-term storage only of the $d$-dimensional trunk gradient $\partial L_n / \partial f_s$.
In sum, we have reduced the peak GPU memory utilization from $O(nV + d)$ to $O(V + d)$, at no expense in  runtime (Table~\ref{tab:wps}).

\paragraph{Inference}
During inference time, the most basic use of the proposed architecture is vanilla next-token autoregressive prediction using the next-token prediction head $P_\theta(x_{t+1} \mid x_{t:1})$, while discarding all others.
However, the additional output heads can be leveraged to speed up decoding from the next-token prediction head with \emph{self-speculative decoding} methods such as blockwise parallel decoding~\citep{stern2018blockwise}---a variant of speculative decoding~\citep{leviathan2023fast} without the need for an additional draft model---and speculative decoding with Medusa-like tree attention~\citep{cai2024medusa}.

\begin{figure}[t!]
    \centering
    \includegraphics[width=\linewidth]{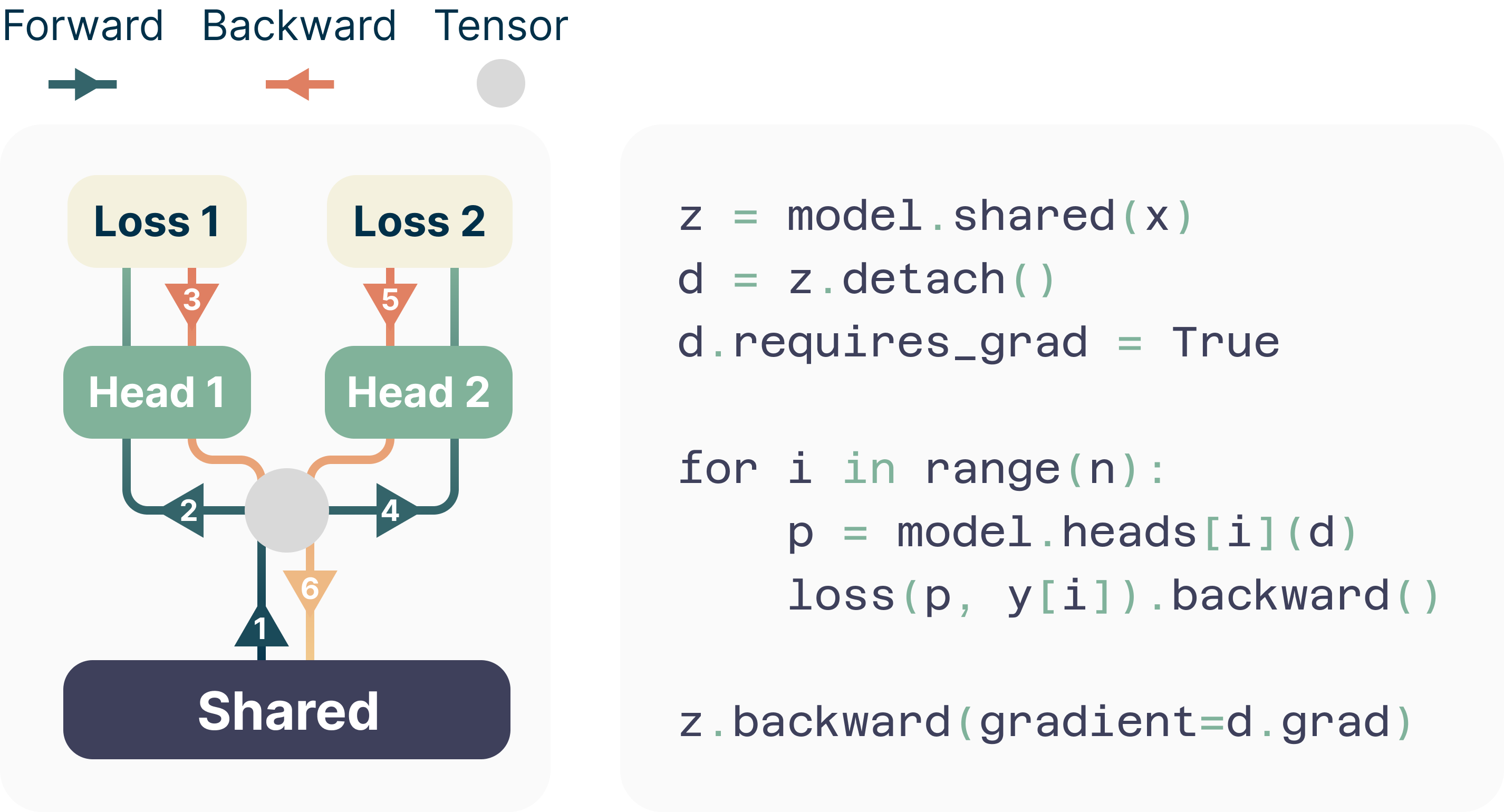}

    \caption{\textbf{Order of the forward/backward in an $n$-token prediction model with $n = 2$ heads.} By performing the forward/backward on the heads in sequential order, we avoid materializing all unembedding layer gradients in memory simultaneously and reduce peak GPU memory usage.}
    \label{fig:backward}
\end{figure}

\section{Experiments on real data}
\label{sect:results}
\begin{table*}[t]
    \centering
    \caption{\textbf{Multi-token prediction improves performance and unlocks efficient byte level training.} We compare models with 7B parameters trained from scratch on 200B and on 314B bytes of code on the MBPP \citep{austin2021program}, HumanEval \citep{chen2021evaluating} and APPS \citep{hendrycks2021measuring} benchmarks. Multi-token prediction largely outperforms next token prediction on these settings. All numbers were calculated using the estimator from \citet{chen2021evaluating} based on 200 samples per problem. The temperatures were chosen optimally (based on test scores; i.e. these are oracle temperatures) for each model, dataset and pass@k and are reported in Table~\ref{tab:optimal_temps}.}
    \label{tab:future_tokens_results}
\begin{tabular}{cccccccccccc}
\toprule
 \multirow[c]{2}{*}{Training data} & \multirow[c]{2}{*}{Vocabulary} & \multirow[c]{2}{*}{n} & \multicolumn{3}{c}{MBPP} & \multicolumn{3}{c}{HumanEval} & \multicolumn{3}{c}{APPS/Intro} \\
 \cmidrule(l){4-6}\cmidrule(l){7-9} \cmidrule(l){10-12}
 &  &  & @1 & @10 & @100 & @1 & @10 & @100 & @1 & @10 & @100 \\
\midrule
\multirow[c]{4}{*}{\shortstack{313B bytes\\(0.5 epochs)}} & \multirow[c]{4}{*}{bytes} & 1 & 19.3 & 42.4 & 64.7 & 18.1 & 28.2 & 47.8 & 0.1 & 0.5 & 2.4 \\
 &  & 8 & \bfseries 32.3 & \bfseries 50.0 & \bfseries 69.6 & \bfseries 21.8 & \bfseries 34.1 & \bfseries 57.9 & \bfseries 1.2 & \bfseries 5.7 & \bfseries 14.0 \\
 &  & 16 & 28.6 & 47.1 & 68.0 & 20.4 & 32.7 & 54.3 & 1.0 & 5.0 & 12.9 \\
 &  & 32 & 23.0 & 40.7 & 60.3 & 17.2 & 30.2 & 49.7 & 0.6 & 2.8 & 8.8 \\
 \midrule
\multirow[c]{5}{*}{\shortstack{200B tokens\\(0.8 epochs)}} & \multirow[c]{5}{*}{32k tokens} & 1 & 30.0 & 53.8 & 73.7 & 22.8 & 36.4 & 62.0 & 2.8 & 7.8 & 17.4 \\
 &  & 2 & 30.3 & 55.1 & 76.2 & 22.2 & 38.5 & 62.6 & 2.1 & 9.0 & 21.7 \\
 &  & 4 & \bfseries 33.8 & \bfseries 55.9 & \bfseries 76.9 & \bfseries 24.0 & \bfseries 40.1 & \bfseries 66.1 & 1.6 & 7.1 & 19.9 \\
 &  & 6 & 31.9 & 53.9 & 73.1 & 20.6 & 38.4 & 63.9 & \bfseries 3.5 & \bfseries 10.8 & \bfseries 22.7 \\
 &  & 8 & 30.7 & 52.2 & 73.4 & 20.0 & 36.6 & 59.6 & 3.5 & 10.4 & 22.1 \\
 \midrule
\multirow[c]{2}{*}{\shortstack{1T tokens\\(4 epochs)}} & \multirow[c]{2}{*}{32k tokens} & 1 & 40.7 & 65.4 & 83.4 & 31.7 &  57.6 & 83.0 & \bfseries 5.4 & \bfseries 17.8 & \bfseries 34.1 \\
 &  & 4 & \bfseries 43.1 & 65.9 &  83.7 & 31.6 & 57.3 & \bfseries 86.2 & 4.3 & 15.6 & 33.7 \\
\bottomrule
\end{tabular}
\end{table*}

We demonstrate the efficacy of multi-token prediction losses by seven large-scale experiments.
Section~\ref{sect:model-scaling} shows how multi-token prediction is increasingly useful when growing the model size.
Section~\ref{sect:decoding} shows how the additional prediction heads can speed up inference by a factor of \textbf{$3\times$} using speculative decoding. 
Section~\ref{sect:byte-level} demonstrates how multi-token prediction promotes learning longer-term patterns, a fact most apparent in the extreme case of byte-level tokenization.
Section~\ref{sect:optimal-future} shows that $4$-token predictor leads to strong gains with a tokenizer of size $32$k.
Section~\ref{sect:multi-epoch} illustrates that the benefits of multi-token prediction remain for training runs with multiple epochs. 
Section~\ref{sect:finetuning} showcases the rich representations promoted by pretraining with multi-token prediction losses by finetuning on the CodeContests dataset~\citep{li2022competition}.
Section~\ref{sect:nlp} shows that the benefits of multi-token prediction carry to natural language models, improving \emph{generative} evaluations such as summarization, while not regressing significantly on standard benchmarks based on multiple choice questions and negative log-likelihoods. 

To allow fair comparisons between next-token predictors and $n$-token predictors, the experiments that follow always compare models with an equal amount of parameters.
That is, when we add $n-1$ layers in future prediction heads, we remove $n-1$ layers from the shared model trunk.
Please refer to Table~\ref{tab:models} for the model architectures and to Table~\ref{tab:hyperparams} for an overview of the hyperparameters we use in our experiments.

\begin{figure}[ht!]

    \centering
    \includegraphics[width=\linewidth]{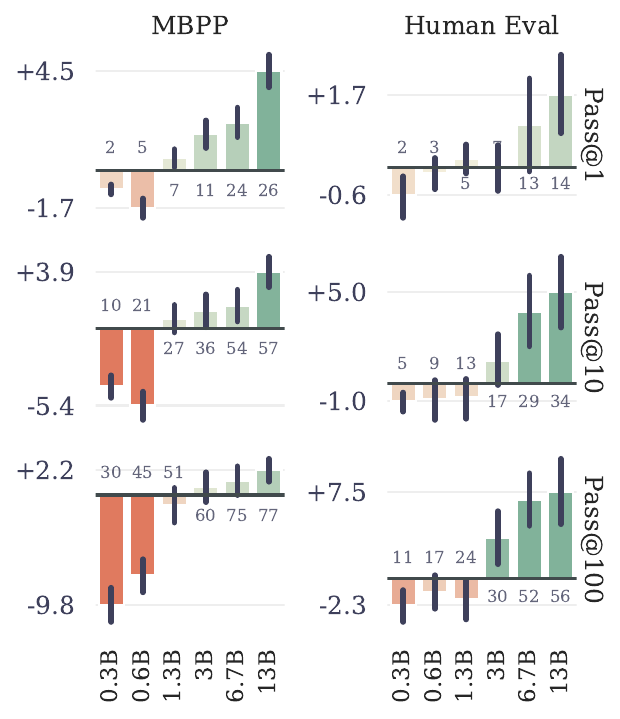}
    \caption{\textbf{Results of $n$-token prediction models on MBPP by model size.} We train models of six sizes in the range or 300M to 13B total parameters on code,
    and evaluate pass@1,10,100 on the MBPP \citep{austin2021program} and HumanEval \cite{chen2021evaluating} benchmark  
    with 1000 samples.
    Multi-token prediction models are worse than the baseline for small model sizes, but outperform the baseline at scale. 
    Error bars are confidence intervals of 90\% computed with bootstrapping over dataset samples. }
    \label{fig:scaling_model}
\end{figure}

\subsection{Benefits scale with model size}
\label{sect:model-scaling}
To study this phenomenon, we train models of six sizes in the range 300M to 13B parameters from scratch on at least 91B tokens of code.
The evaluation results in Figure~\ref{fig:scaling_model} for MBPP \citep{austin2021program} and HumanEval \citep{chen2021evaluating} show that it is possible, with the exact same computational budget, to squeeze much more performance out of large language models given a fixed dataset using multi-token prediction. 

We believe this \emph{usefulness only at scale} to be a likely reason why multi-token prediction has so far been largely overlooked as a promising training loss for large language model training. 

\subsection{Faster inference}
\label{sect:decoding}

We implement greedy \emph{self-speculative decoding} \cite{stern2018blockwise} with heterogeneous batch sizes using xFormers \citep{xFormers2022} and measure decoding speeds of our best 4-token prediction model with 7B parameters on completing prompts taken from a test dataset of code and natural language (Table~\ref{tab:decoding}) not seen during training. We observe a speedup of $\mathbf{3.0\times}$ on code with an average of 2.5 accepted tokens out of 3 suggestions on code, and of $2.7\times$ on text. On an 8-byte prediction model, the inference speedup is $6.4\times$ (Table~\ref{tab:decoding_bytes}). Pretraining with multi-token prediction allows the additional heads to be much more accurate than a simple finetuning of a next-token prediction model, thus allowing our models to unlock self-speculative decoding's full potential. %

\subsection{Learning global patterns with multi-byte prediction}
\label{sect:byte-level}
To show that the next-token prediction task latches to local patterns, we went to the extreme case of byte-level tokenization by training a 7B parameter byte-level transformer  on 314B bytes, which is equivalent to around 116B tokens. The 8-byte prediction model achieves astounding improvements compared to next-byte prediction, solving 67\% more problems on MBPP pass@1 and 20\% more problems on HumanEval pass@1. 

Multi-byte prediction is therefore a very promising avenue to unlock efficient training of byte-level models. Self-speculative decoding can achieve speedups of 6 times for the 8-byte prediction model, which would allow to fully compensate the cost of longer byte-level sequences at inference time and even be faster than a next-token prediction model by nearly two times. The 8-byte prediction model is a strong byte-based model, approaching the performance of token-based models despite having been trained on $1.7\times$ less data.

\subsection{Searching for the optimal $n$}
\label{sect:optimal-future}

To better understand the effect of the number of predicted tokens, we did comprehensive ablations on models of scale 7B trained on 200B tokens of code. We try $n=1,2,4,6$ and $8$ in this setting. Results in table \ref{tab:future_tokens_results} show that training with 4-future tokens outperforms all the other models consistently throughout HumanEval and MBPP for pass at 1, 10 and 100 metrics: +3.8\%, +2.1\% and +3.2\% for MBPP and +1.2\%, +3.7\% and +4.1\% for HumanEval. Interestingly, for APPS/Intro, $n=6$ takes the lead with +0.7\%, +3.0\% and +5.3\%. It is very likely that the optimal window size depends on input data distribution. As for the byte level models the optimal window size is more consistent (8 bytes) across these benchmarks.

\subsection{Training for multiple epochs}
\label{sect:multi-epoch}
Multi-token training still maintains an edge on next-token prediction when trained on multiple epochs of the same data. The improvements diminish but we still have a +2.4\% increase on pass@1 on MBPP and +3.2\% increase on pass@100 on HumanEval, while having similar performance for the rest. As for APPS/Intro, a window size of 4 was already not optimal with 200B tokens of training.

\subsection{Finetuning multi-token predictors}
\label{sect:finetuning}
Pretrained models with multi-token prediction loss also outperform next-token models for use in finetunings. We evaluate this by finetuning 7B parameter models from Section~\ref{sect:byte-level} on the CodeContests dataset \citep{li2022competition}. We compare the 4-token prediction model with the next-token prediction baseline, and include a setting where the 4-token prediction model is stripped off its additional prediction heads and finetuned using the classical next-token prediction target. According to the results in Figure~\ref{fig:dm_contests}, both ways of finetuning the 4-token prediction model outperform the next-token prediction model on pass@k across $k$. This means the models are both better at understanding and solving the task and at generating diverse answers. Note that CodeContests is the most challenging coding benchmark we evaluate in this study. Next-token prediction finetuning on top of 4-token prediction pretraining appears to be the best method overall, in line with the classical paradigm of pretraining with auxiliary tasks followed by task-specific finetuning. Please refer to Appendix~\ref{app:finetuning} for details.
\begin{figure}[ht!]

        \centering
        \includegraphics[width=0.8\linewidth]{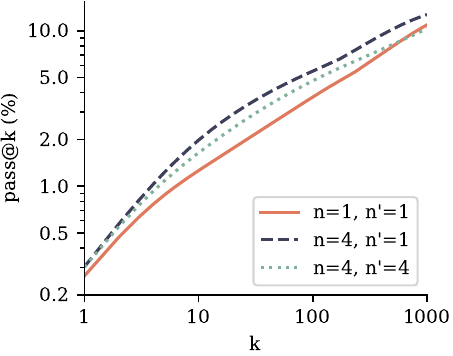}
    
    \caption{\textbf{Comparison of finetuning performance on CodeContests.}
    We finetune a $4$-token prediction model on CodeContests \citep{li2022competition} (train split) using $n'$-token prediction as training loss with $n' = 4$ or $n' = 1$, and compare to a finetuning of the next-token prediction baseline model ($n = n' = 1$). For evaluation, we generate 1000 samples per test problem for each temperature $T \in \{0.5, 0.6, 0.7, 0.8, 0.9\}$, and compute pass@k for each value of $k$ and $T$. Shown is $k \mapsto \max_T \mathrm{pass\_at}(k, T)$, i.e. we grant access to a temperature oracle. We observe that both ways of finetuning the 4-token prediction model outperform the next-token prediction baseline. Intriguingly, using next-token prediction finetuning on top of the 4-token prediction model appears to be the best method overall.
}
    \label{fig:dm_contests}
\end{figure}

\vspace{-1em}
\subsection{Multi-token prediction on natural language}
\label{sect:nlp}

To evaluate multi-token prediction training on natural language, we train models of size 7B parameters on 200B tokens of natural language with a 4-token, 2-token and next-token prediction loss, respectively. In Figure ~\ref{fig:nlp_evol}, we evaluate the resulting checkpoints on 6 standard NLP benchmarks. On these benchmarks, the 2-future token prediction model performs on par with the next-token prediction baseline throughout training. The 4-future token prediction model suffers a performance degradation. Detailed numbers are reported in Appendix~\ref{app:choice_nlp}.

\begin{figure}[t!]

    \centering
    \includegraphics[width=0.8\columnwidth]{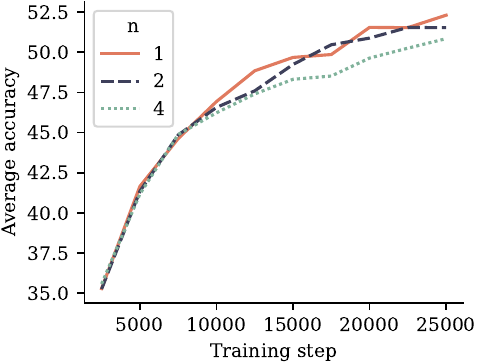}
    \caption{\textbf{Multi-token training with 7B models doesn't improve performance on choice tasks.} This figure shows the evolution of average accuracy of 6 standard NLP benchmarks. Detailed results in Appendix~\ref{app:choice_nlp} for 7B models trained on 200B tokens of language data. The 2 future token model has the same performance as the baseline and the 4 future token model regresses a bit. Larger model sizes might be necessary to see improvements on these tasks.
    }
    \label{fig:nlp_evol}
\end{figure}

However, we do not believe that multiple-choice and likelihood-based benchmarks are suited to effectively discern \emph{generative capabilities} of language models. In order to avoid the need for human annotations of generation quality or language model judges---which comes with its own pitfalls, as pointed out by \citet{koo2023benchmarking}---we conduct evaluations on summarization and natural language mathematics benchmarks and compare pretrained models with training sets sizes of 200B and 500B tokens and with next-token and multi-token prediction losses, respectively.

For summarization, we use eight benchmarks where ROUGE metrics \citep{lin-2004-rouge} with respect to a ground-truth summary allow automatic evaluation of generated texts. We finetune each pretrained model on each benchmark's training dataset for three epochs and select the checkpoint with the highest ROUGE-L $F_1$ score on the validation dataset.
Figure~\ref{fig:summarization_scale} shows that multi-token prediction models  with both $n=2$ and $n=4$ improve over the next-token baseline in ROUGE-L $F_1$ scores for both training dataset sizes,
with the performance gap shrinking with larger dataset size. All metrics can be found in Appendix~\ref{app:summarzation_all}.
\begin{figure}[t!]
    \centering
    \includegraphics[width=0.8\linewidth]{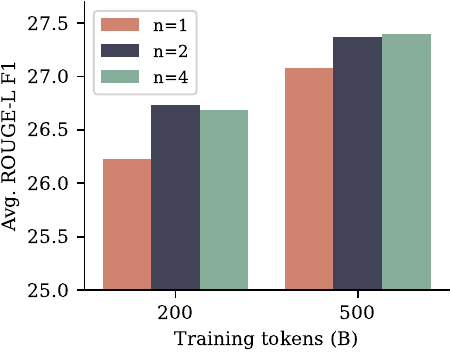}
    \caption{\textbf{Performance on abstractive text summarization.}
    Average ROUGE-L (longest common subsequence overlap) $F_1$ score for 7B models trained on 200B and 500B tokens of natural language on eight summarization benchmarks.
    We finetune the respective models on each task's training data separately for three epochs and select the checkpoints with highest ROUGE-L $F_1$ validation score.
     Both $n=2$ and $n=4$ multi-token prediction models have an advantage over next-token prediction models.
    Individual scores per dataset and more details can be found in Appendix~\ref{app:summarzation_all}.
    }
    \label{fig:summarization_scale}
\end{figure}

For natural language mathematics, we evaluate the pretrained models in 8-shot mode on the GSM8K benchmark \citep{cobbe2021training} and measure accuracy of the final answer produced after a chain-of-thought elicited by the fewshot examples. We evaluate pass@k metrics to quantify diversity and correctness of answers like in code evaluations and use sampling temperatures between 0.2 and 1.4. The results are depicted in Figure~\ref{fig:gsm8k} in Appendix~\ref{app:gsm8k}. For 200B training tokens, the $n=2$ model clearly outperforms the next-token prediction baseline, while the pattern reverses after 500B tokens and $n=4$ is worse throughout.

\section{Ablations on synthetic data}
What drives the improvements in downstream performance of multi-token prediction models on all of the tasks we have considered? By conducting toy experiments on controlled training datasets and evaluation tasks, we demonstrate that multi-token prediction leads to \emph{qualitative changes in model capabilities and generalization behaviors}.
In particular, Section~\ref{sect:induction} shows that for small model sizes, \emph{induction capability}---as discussed by \citet{olsson2022context}---either only forms when using multi-token prediction as training loss, or it is vastly improved by it. Moreover, Section~\ref{sect:algorithmic} shows that multi-token prediction improves generalization on an arithmetic task, even more so than tripling model size.

\subsection{Induction capability}
\label{sect:induction}
\begin{figure}[t!]
    \centering
    \includegraphics[width=0.8\linewidth]{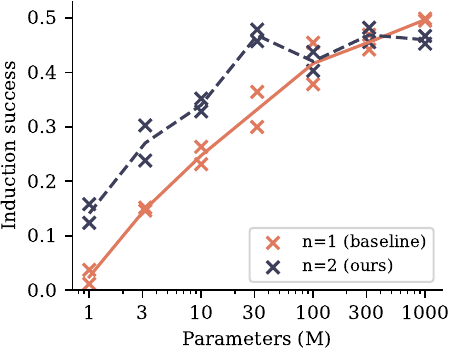}
    \caption{\textbf{Induction capability of $n$-token prediction models.} 
     Shown is accuracy on the second token of two token names that have already been mentioned previously. Shown are numbers for models trained with a next-token and a 2-token prediction loss, respectively, with two independent runs each. The lines denote per-loss averages. For small model sizes, next-token prediction models learn practically no or significantly worse induction capability than 2-token prediction models, with their disadvantage disappearing at the size of 100M nonembedding parameters.
}

    \label{fig:induction_size}
\end{figure}

Induction describes a simple pattern of reasoning that completes partial patterns by their most recent continuation \citep{olsson2022context}. In other words, if a sentence contains ``AB'' and later mentions ``A'', induction is the prediction that the continuation is ``B''. We design a setup to measure induction capability in a controlled way. Training small models of sizes 1M to 1B nonembedding parameters on a dataset of children stories, we measure induction capability by means of an adapted test set: in 100 stories from the original test split, we replace the character names by randomly generated names that consist of two tokens with the tokenizer we employ.
Predicting the first of these two tokens is linked to the semantics of the preceding text, while predicting the second token of each name's occurrence after it has been mentioned at least once can be seen as a pure induction task. In our experiments, we train for up to 90 epochs and perform early stopping with respect to the test metric (i.e. we allow an epoch oracle). Figure~\ref{fig:induction_size} reports induction capability as measured by accuracy on the names' second tokens in relation to model size for two runs with different seeds.

We find that 2-token prediction loss leads to a vastly improved formation of induction capability for models of size 30M nonembedding parameters and below, with their advantage disappearing for sizes of 100M nonembedding parameters and above.\footnote{Note that a perfect score is not reachable in this benchmark as some of the tokens in the names in the evaluation dataset never appear in the training data, and in our architecture, embedding and unembedding parameters are not linked.}
We interpret this finding as follows: multi-token prediction losses help models to learn transferring information across sequence positions, which lends itself to the formation of induction heads and other in-context learning mechanisms. However, once induction capability has been formed, these \emph{learned features} transform induction into a task that can be solved \emph{locally} at the current token and learned with next-token prediction alone. From this point on, multi-token prediction actually hurts on this restricted benchmark---but we surmise that there are higher forms of in-context reasoning to which it further contributes, as evidenced by the results in Section~\ref{sect:model-scaling}. In Figure~\ref{fig:induction_b3g}, we provide evidence for this explanation: replacing the children stories dataset by a higher-quality 9:1 mix of a books dataset with the children stories, we enforce the formation of induction capability early in training by means of the dataset alone. By consequence, except for the two smallest model sizes, the advantage of multi-token prediction on the task disappears: feature learning  of induction features has converted the task into a pure next-token prediction task.

\subsection{Algorithmic reasoning}
\label{sect:algorithmic}
\begin{figure}[t!]
    \centering
    \includegraphics[width=0.95\linewidth]{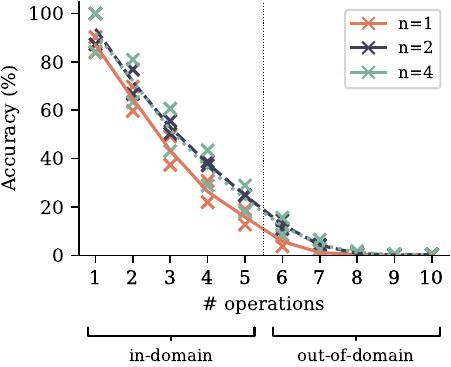}
    \caption{\textbf{Accuracy on a polynomial arithmetic task with varying number of operations per expression.} Training with multi-token prediction losses increases accuracy across task difficulties. In particular, it also significantly improves out-of-domain generalization performance, albeit at a low absolute level. Tripling the model size, on the other hand, has a considerably smaller effect than replacing next-token prediction with multi-token prediction loss (Figure~\ref{fig:poly_scale}). Shown are two independent runs per configuration with 100M parameter models.
}
    \label{fig:poly_0}
\end{figure}

Algorithmic reasoning tasks allow to measure more involved forms of in-context reasoning than induction alone. We train and evaluate models on a task on polynomial arithmetic in the ring $\mathbb{F}_7[X]/(X^5)$ with unary negation, addition, multiplication and composition of polynomials as operations. The coefficients of the operands and the operators are sampled uniformly. %
The task is to return the coefficients of the polynomials corresponding to the resulting expressions. The number $m$ of operations contained in the expressions is selected uniformly from the range from 1 to 5 at training time, and can be used to adjust the difficulty of both in-domain ($m \leq 5$) and out-of-domain ($m > 5$) generalization evaluations. The evaluations are conducted with greedy sampling on a fixed test set of 2000 samples per number of operations. We train models of two small sizes with 30M and 100M nonembedding parameters, respectively. This simulates the conditions of large language models trained on massive text corpora which are likewise under-parameterized and unable to memorize their entire training datasets.

Multi-token prediction improves algorithmic reasoning capabilities as measured by this task across task difficulties (Figure~\ref{fig:poly_0}). In particular, it leads to impressive gains in out-of-distribution generalization, despite the low absolute numbers. Increasing the model size from 30M to 100M parameters, on the other hand, does not improve evaluation accuracy as much as replacing next-token prediction by multi-token prediction does (Figure~\ref{fig:poly_scale}). In Appendix~\ref{app:poly}, we furthermore show that multi-token prediction models retain their advantage over next-token prediction models on this task when trained and evaluated with \emph{pause tokens} \citep{goyal2023think}.

\section{Why does it work? Some speculation}

\label{sec:discussion}

Why does multi-token prediction afford superior performance on coding evaluation benchmarks, and on small algorithmic reasoning tasks?
Our intuition, developed in this section, is that multi-token prediction mitigates the distributional discrepancy between training-time teacher forcing and inference-time autoregressive generation.
We support this view with an illustrative argument on the \emph{implicit weights} multi-token prediction assigns to tokens depending on their relevance for the continuation of the text, as well as with an information-theoretic decomposition of multi-token prediction loss.

\subsection{Lookahead reinforces choice points}
\label{sect:derailing}
\begin{figure}[t!]
    \centering
    \includegraphics[width=0.95\linewidth]{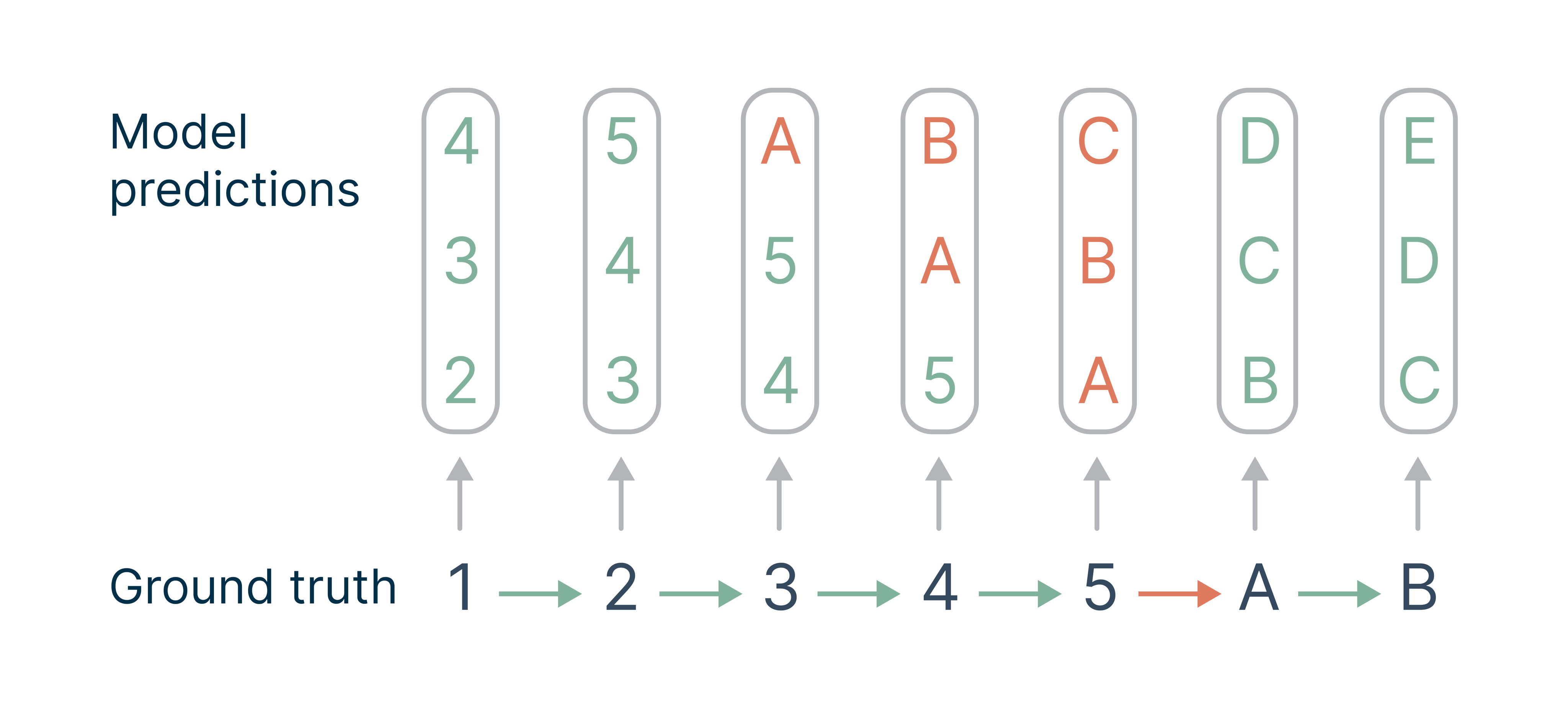}
    \caption{\textbf{Multi-token prediction loss assigns higher implicit weights to \emph{consequential} tokens.} Shown is a sequence in which all transitions except ``5 $\to$ A'' are easy to predict, alongside the corresponding prediction targets in 3-token prediction. Since the consequences of the difficult transition ``5 $\to$ A'' are likewise hard to predict, this transition receives a higher implicit weight in the overall loss via its correlates ``3 $\to$ A'', ..., ``5 $\to$ C''.
}

\label{fig:weights}
\end{figure}

Not all token decisions are equally important for generating useful texts from language models \citep{bachmann2024pitfalls,lin2024rho1}.
While some tokens allow stylistic variations that do not constrain the remainder of the text, others represent \emph{choice points} that are linked with higher-level semantic properties of the text and may decide whether an answer is perceived as useful or \emph{derailing}.

Multi-token prediction implicitly assigns weights to training tokens depending on how closely they are correlated with their successors.
As an illustrative example, consider the sequence depicted in Figure~\ref{fig:weights} where one transition is a hard-to-predict choice point while the other transitions are considered ``inconsequential''. Inconsequential transitions following a choice point are likewise hard to predict in advance. By marking and counting loss terms, we find that $n$-token prediction associates a weight of $\frac{n(n+1)}{2}$ to choice points via their correlates, and a smaller weight of $n$ to inconsequential points.
Please refer to Appendix~\ref{app:choice} for more details.
Generally, we believe that the quality of text generations depends on picking the right decisions at choice points, and that $n$-token prediction losses promote those.

\subsection{Information-theoretic argument}
\label{sect:mismatch}

Language models are typically trained by teacher-forcing, where the model receives the ground truth for each future token during training. 
However, during test time generation is unguided and autoregressive, whereby errors accumulate.
Teacher-forcing, we argue, encourages models to focus on predicting well in the very short term, at the potential expense of ignoring longer-term dependencies in the overall structure of the generated sequence.

To illustrate the impact of multi-token prediction, consider the following information-theoretic argument.
Here, $X$ denotes the next future token, and $Y$ the second-next future token.
The production of both of these tokens is conditioned on some observed, input context $C$, that we omit from our equations for simplicity.
When placed before token $X$, vanilla next-token prediction concerns the quantity $H(X)$, while multi-token prediction with $n=2$ aims at $H(X) + H(Y)$.
We decompose these two quantities as:
\begin{align*}
  H(X)        &= H(X \mid Y) +   I(X; Y),\\
  H(X) + H(Y) &= H(X \mid Y) + 2 I(X; Y) + H(Y \mid X).
\end{align*}
By discarding the term $H(Y \mid X)$---which appears again when predicting at the following position---we observe that 2-token prediction increases the importance of $I(X; Y)$ by a factor of $2$.
So, multi-token predictors are more accurate at predicting tokens $X$ that are of relevance for the remainder of the text to come. In Appendix~\ref{app:loss-dec}, we give a relative version of the above equations that shows the increased weight of \emph{relative mutual information} in a loss decomposition of 2-token prediction loss.

\section{Related work}

\paragraph{Language modeling losses}

\citet{dong2019unified} and \citet{tay2022ul2} train on a mixture of denoising tasks with different attention masks (full, causal and prefix attention) to bridge the performance gap with next token pretraining on generative tasks. \citet{tay2022ul2} uses the span corruption objective, which replaces spans of tokens with special tokens for the encoder and the decoder then predicts the contents of those spans. Unlike UniLM, this allows full causal training with teacher forcing. Similarly, \citet{yang2019xlnet} train on permuted sequences, while conserving the original positional embeddings, effectively training the model to predict various parts of the sequence given a mix of past and future information. This permuted language modeling is the closest task to ours since it allows predicting beyond the next token. However all of these language modeling tasks train on a small percentage of the input text: on average only 15\% of the tokens are backwarded through. For \citet{dong2019unified}, where the masking is done in BERT style, it is hard to mask more than 15\% since it destroys too much information. For \citet{tay2022ul2}, it is technically possible to have a larger proportion but in practice, the settings used have between 15\% and 25\% of masked tokens. \cite{yang2019xlnet} also makes it possible to train on the whole sequence since it is only permuted, and no information is lost. Yet, in practice, since the completely random permutation is very hard to reconstruct, only 15\% are predicted for training stability reasons. 

\paragraph{Multi-token prediction in language modelling}
\citet{qi2020prophetnet} argue that multi-token prediction encourages planning, improves representations and prevents the overfitting on local patterns that can result from teacher-forced training. However, their technical approach replicates the residual stream $n$-fold while ours allows for compute-matched comparisons and makes the residual representations participate more directly in the auxiliary loss terms. \citet{stern2018blockwise} and \citet{cai2024medusa} propose model finetunings with multi-token prediction for faster inference but do not study the effects of such a loss during pretraining. \citet{pal2023future} use probing methods to show that next-token prediction models are able to predict additional consecutive tokens to a certain extent, but less so than our models which are specifically trained for this task.
\citet{jianyu_leon} observe improvements in language modelling tasks with multi-label binary classification over the occurrence of vocabulary words in the future as an auxiliary learning task.

\paragraph{Self-speculative decoding}
\citet{stern2018blockwise} are, to the best of our knowledge, the first to suggest a speculative decoding scheme for faster inference. Our architecture replaces their linear prediction heads by transformer layers, but is otherwise similar. By reorganizing the order of the forward/backward, we can use all loss terms instead of stochastically picking one head for loss computation. \citet{cai2024medusa} present a more elaborate self-speculative decoding scheme that uses the top-$k$ predictions of each head instead of the best one only. It can be used with the multi-token prediction models we train.

\paragraph{Multi-target prediction}
Multi-task learning is the paradigm of training neural networks jointly on several tasks to improve performance on the tasks of interest \citep{caruana1997multitask}. Learning with such auxiliary tasks allows models to exploit dependencies between target variables and can even be preferable in the case of independent targets \citep{waegeman2019multi}. While more specifically tailored architectures for multi-target prediction are conceivable \citep{spyromitros2016multi,read2021classifier}, modern deep learning approaches usually rely on large shared model trunks with separate prediction heads for the respective tasks \citep{caruana1997multitask,silver2016mastering,lample2022hypertree} like we do. Multi-target prediction has been shown to be a successful strategy in various domains, e.g. for learning time series prediction with more distant time steps in the future as auxiliary targets \citep{vapnik2009new} or for learning from videos with several future frames \citep{mathieu2016deep,srivastava2016unsupervised} or representations of future frames \citep{vondrick2016anticipating} as auxiliary targets.

\section{Conclusion}
\label{sect:conclusion}

We have proposed multi-token prediction as an improvement over next-token prediction in training language models for generative or reasoning tasks. 
Our experiments (up to 7B parameters and 1T tokens) show that this is increasingly useful for larger models and in particular show strong improvements for code tasks.  
We posit that our method reduces distribution mismatch between teacher-forced training and autoregressive generation. %
When used with speculative decoding, exact inference gets 3 times faster.

In future work we would like to better understand how to automatically choose $n$ in multi-token prediction losses.
One possibility to do so is to use loss scales and loss balancing~\cite{defossez2022highfi}.
Also, optimal vocabulary sizes for multi-token prediction are likely different from those for next-token prediction, and tuning them could lead to better results, as well as improved trade-offs between compressed sequence length and compute-per-byte expenses.
Finally, we would like to develop improved auxiliary prediction losses that operate in embedding spaces~\cite{lecun2022path}.

\section*{Impact statement}
The goal of this paper is to make language models more compute and data efficient. While this may in principle reduce the ecological impact of training LLMs, we shall be careful about \emph{rebound effects}.
All societal advantages, as well as risks, of LLMs should be considered while using this work.

\section*{Environmental impact}
In aggregate, training all models reported in the paper required around 500K GPU hours of computation on hardware of type A100-80GB and H100. Estimated total emissions were around 50 tCO2eq, 100\% of which were offset by Meta’s sustainability program.

\section*{Acknowledgements}
We thank Jianyu Zhang, Léon Bottou, Emmanuel Dupoux, Pierre-Emmanuel Mazaré, Yann LeCun, Quentin Garrido, Megi Dervishi, Mathurin Videau and Timothée Darcet and other FAIR PhD students and CodeGen team members for helpful discussions. We thank Jonas Gehring for his technical expertise and the original Llama team and xFormers team for enabling this kind of research.

\clearpage

\bibliography{main}
\bibliographystyle{plainnat}

\clearpage
\newpage

\renewcommand{\thefigure}{S\arabic{figure}}
\renewcommand{\thetable}{S\arabic{table}}
\newpage
\appendix
\onecolumn

\section{Additional results on self-speculative decoding}
\label{app:decoding}
\begin{figure}[h!]
    \begin{subfigure}[b]{0.45\linewidth}
        \centering
        \includegraphics[width=0.95\linewidth]{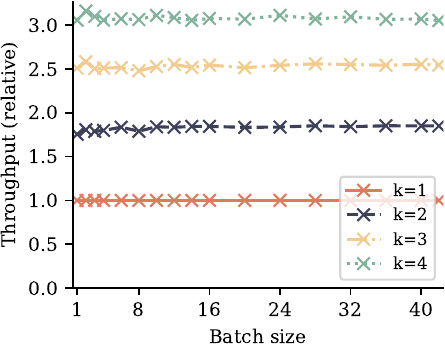}
    \end{subfigure}
    \hfill
    \begin{subfigure}[b]{0.45\linewidth}
        \centering
        \includegraphics[width=0.95\linewidth]{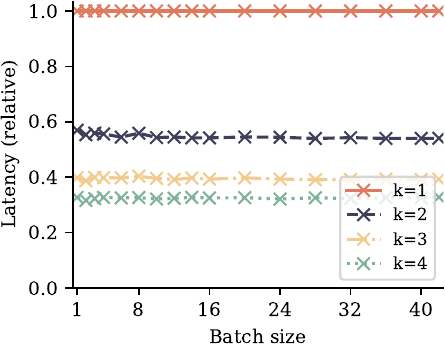}
    \end{subfigure}

    \caption{\textbf{Decoding speeds and latencies with self-speculative decoding relative to standard autoregressive decoding.} We use $k$ heads of a 4-token prediction model and evaluate decoding speeds of a code model as explained in Table~\ref{tab:decoding}. All numbers are relative to the autoregressive ($k = 1$) baseline with the same batch size.
    }
    \label{fig:decoding}
\end{figure}

\begin{table}[H]
\centering
\caption{\textbf{Relative speedups with self-speculative decoding.} 
For wikipedia and books we prompt a 7B parameter model trained on 500B tokens, and for code we prompt a 7B parameter model trained on 1T tokens of code on 4200 sequences of 512 tokens from a test dataset not seen during training, and generate completions consisting of 512 tokens using greedy self-speculative decoding  \cite{stern2018blockwise} using the indicated number of heads from a 4-token prediction model. Note that the maximal speedup that can be obtained with self-speculative decoding using $k$ heads is $k$. The last column shows the average number of tokens retrieved from a forward containing this sequence (both verification and prediction). The speedup was evaluated at the maximal batch size of 42, but is constant across batch sizes (Figure~\ref{fig:decoding}).
}
\label{tab:decoding}
\resizebox{0.9\textwidth}{!}{
\begin{tabular}{rrrrrrr}
\toprule
   & \multicolumn{2}{c}{Wikipedia} & \multicolumn{2}{c}{Books} & \multicolumn{2}{c}{Code}\\
   \cmidrule(l){2-3} \cmidrule(l){4-5} \cmidrule(l){6-7}
  \# Heads used &  Rel. speedup & Tokens / forward & Rel. speedup & Tokens / forward & Rel. speedup & Tokens / forward\\
\midrule
1 &    1.00 &          1.00 &    1.00 &          1.00 & 1.00 & 1.00\\
2 &    1.79 &          1.88 &    1.77 &          1.87 & 1.85 & 1.94\\
3 &    2.35 &          2.57 &    2.32 &          2.56 & 2.54 & 2.78 \\
4 &    2.74 &          3.12 &    2.67 &          3.09 & \textbf{3.05} & \textbf{3.50}\\
\bottomrule
\end{tabular}}
\end{table}

\begin{table}[H]
\centering
\caption{\textbf{Relative speedups with self-speculative decoding with byte-level models on code.} 
We prompt the 7B parameter models from Section~\ref{sect:byte-level} on 4096 sequences of 1024 bytes of code not seen during training, and generate completions consisting of 1024 bytes using greedy self-speculative decoding  \cite{stern2018blockwise} as in Table~\ref{tab:decoding}. The speedup was evaluated at a batch size of 16.
}
\label{tab:decoding_bytes}
\resizebox{0.9\textwidth}{!}{
\begin{tabular}{rrrrrrrrr}
\toprule
   & \multicolumn{2}{c}{$n = 8$} & \multicolumn{2}{c}{$n = 16$} & \multicolumn{2}{c}{$n = 32$} \\
   \cmidrule(l){2-3} \cmidrule(l){4-5} \cmidrule(l){6-7}
  \# Heads used &  Rel. speedup & Tokens / forward & Rel. speedup & Tokens / forward  & Rel. speedup & tokens / forward \\
\midrule
1 & 1.00 & 1.00 & 1.00 & 1.00 & 1.00 & 1.00 \\
2 &        1.94 &          1.98 &        1.94 &          1.98 &          1.93 &          1.97 \\

4 &         3.67 &          3.84 &        3.63 &          3.81 &          3.62 &          3.80 \\

8 &         6.39  &          7.04 &        6.25 &          6.92 &         6.22 &          6.89 \\

12 &  $-$    &     $-$      &      8.07 &          9.36 &       8.01 &          9.30 \\

16 &    $-$     &    $-$        &      9.24 &         11.20 &         9.15 &         11.15 \\

20 &     $-$      &   $-$         &   $-$       &     $-$       &          9.83 &         12.61 \\

24 &    $-$     &     $-$       &     $-$    &     $-$       &       10.34 &         13.67 \\

28 &   $-$       &     $-$       &   $-$     &     $-$       &        10.55 &         14.58 \\

32 &    $-$      &    $-$        &   $-$     &     $-$       &       10.84 &         15.35 \\
\bottomrule
\end{tabular}}
\end{table}

\newpage

\section{Alternative architectures}
\label{app:architecture}
\begin{table*}[ht!]
    \centering
\caption{\textbf{Alternative architectures improve on baseline but not as consistently.} Alternative architectures for multi-token prediction are worth exploring to improve efficiency. Here we tried Anticausal, causal and linear and showed no significant improvement with respect to Parallel architecture.}
\label{tab:extra_archs}
\resizebox{.95\textwidth}{!}{
\begin{tabular}{lllrrrrrrrrrr}
\toprule
 &  &  &  & \multicolumn{3}{c}{MBPP} & \multicolumn{3}{c}{HumanEval} & \multicolumn{3}{c}{APPS/Intro} \\
$n$ & Head type & Architecture & +Layers &  @1 & @10 & @100 & @1 & @10 & @100 & @1 & @10 & @100 \\
\midrule
1 & transformer & parallel & 0 & 30.0 & 53.8 & 73.7 & 22.8 & 36.4 & 62.0 & 2.8 & 7.8 & 17.4 \\
\midrule
\multirow[c]{5}{*}{4} & linear & parallel & 0 & 33.6 & 55.0 & 76.2 & 21.9 & 38.5 & 63.7 & 3.1 & 10.1 & 23.0 \\
\cmidrule{2-13}
 & \multirow[c]{4}{*}{transformer} & anticausal & 0 & 30.8 & 54.8 & 75.3 & 20.9 & 38.4 & 64.5 & 2.0 & 8.7 & 21.6 \\
\cmidrule{3-13}
 &  & causal & 0 & 31.9 & 54.9 & 74.9 & 20.9 & 38.1 & 67.3 & 4.0 & 11.6 & 22.8 \\
\cmidrule{3-13}
 &  & \multirow[c]{2}{*}{parallel} & 0 & 33.8 & 55.9 & 76.9 & 24.0 & 40.1 & 66.1 & 1.6 & 7.1 & 19.9 \\
 &  &  & 3 & 33.3 & 55.7 & 77.3 & 22.4 & 39.4 & 66.7 & 2.6 & 9.5 & 22.1 \\
\bottomrule
\end{tabular}
}
\end{table*}

The architecture described in Section~\ref{sect:method} is not the only sensible option, but proved technically viable and well-performing in our experiments. We describe and compare alternative architectures in this section.
\paragraph{Replicated unembeddings}
Replicating the unembedding matrix $n$ times is a simple method for implementing multi-token prediction architectures. However, it requires matrices with shapes $(d, nV)$ in the notation of Section~\ref{sect:method}, which is prohibitive for large-scale trainings. 

\paragraph{Linear heads}
Apart from using a single transformer layer for the heads $H_i$, other architectures are conceivable. We experimented with a single linear layer without any nonlinearity as heads, amounting to linear probing of the model's residual representation $z$. Architectures with more than one layer per head are also possible, but we did not pursue this direction further.

\paragraph{Causal and anticausal variant}
Instead of making the prediction heads $P_i(x_{t+i}\,|\,z_{t:1})$ architecturally independent of each other, we can also allow them to rely on other heads' (pre-unembedding) outputs. In a \emph{causal} variant, later prediction heads are applied on top of the previous ones, i.e. the $i$-th prediction head $P_i$ is given by 
\[ P_\theta(x_{t+i}|\cdot) = \mathrm{softmax} \circ f_u \circ f_{h_i} \circ f_{h_{i-1}} \cdots \circ f_{h_1} \circ f_s. \]
In another \emph{anticausal} variant, the network starts by predicting the most distant tokens before gradually refining up to the following token:
\[ P_\theta(x_{t+i}|\cdot) = \mathrm{softmax} \circ f_u \circ f_{h_i} \circ f_{h_{i+1}} \cdots \circ f_{h_{n}} \circ f_s. \]
These architectures likewise allow a sequential forward/backward order as the parallel architecture from Section~\ref{sect:method}. This is described in Figure~\ref{fig:backward_causal}.

\begin{figure*}[ht!]
    \centering
    \includegraphics{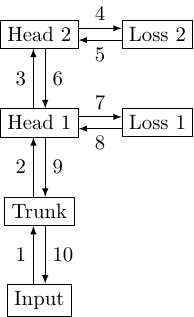}

    \caption{\textbf{Order of the forward/backward in a causal $n$-token prediction model with $n = 2$ heads.} Like in the forward/backward depicted for parallel prediction heads in Figure~\ref{fig:backward}, we avoid materializing all unembedding layer gradients in memory simultaneously and reduce peak GPU memory usage significantly. The iteration over the heads starts with the one furthest to the trunk. At each head, a gradient from the succeeding prediction heads and from the head's own loss are accumulated for both the head's output and its weights.
    }
    \label{fig:backward_causal}
\end{figure*}

\newpage
\section{Training speeds}
\begin{table}[!h]
    \centering
        \caption{\textbf{Training time relative to next-token prediction training.} The slight overhead when using multi-token prediction here is explained by a suboptimal use of Fully Sharded Data Parallel. In our implementation, when doing separate backward passes for each head, we lose the overlap of layer weight communication and computation, therefore it incurs a very slight overhead that can be removed if reimplemented correctly.}
    \label{tab:wps}
    \begin{tabular}{lrrr}
        \toprule
        Model & n=1 & n=2 & n=4 \\
        \midrule
        0.3B & 1.00 & 1.07 & 1.22 \\
        0.6B & 1.00 & 1.05 & 1.13 \\
        1.3B & 1.00 & 1.04 & 1.12 \\
        3B & 1.00 & 1.02 & 1.07 \\
        6.7B & 1.00 & 1.02 & 1.07 \\
        13B & 1.00 & 1.04 & 1.09 \\
        \bottomrule
    \end{tabular}
\end{table}

\section{Finetuning}
\begin{table*}[ht!]
    \centering
    \caption{\textbf{Finetuning LLama 2 with multi-token prediction does not significantly improve performance.} We tried to finetune LLama 2 with 4-token prediction but this did not yield significant improvements compared to the baseline. We suppose that this new loss changes the initialization too brutally and never really recovers. We still some improvements for example on MBPP Pass@1. All runs use 200B tokens of code.}
\label{tab:finetune_llama}
\resizebox{.95\textwidth}{!}{
\begin{tabular}{lllrrrrrrrrr}
\toprule
   &  &   & \multicolumn{3}{c}{MBPP} & \multicolumn{3}{c}{HumanEval} & \multicolumn{3}{c}{APPS/Intro} \\
$n$ & Head type  & +Layers & @1 & @10 & @100 & @1 & @10 & @100 & @1 & @10 & @100 \\
\midrule
 1 & transformer  & 0 & 39.6 & 65.1 & 82.4 & 31.4 & 57.7 & 84.7 & 10.0 & 21.6 & 36.7 \\
\cmidrule{1-12}
   \multirow[c]{3}{*}{4} & linear  & 0 & 39.3 & 63.7 & 81.3 & 29.0 & 53.4 & 82.2 & 6.9 & 20.0 & 34.0 \\
 \cmidrule{2-12}
   & \multirow[c]{2}{*}{transformer}  & 0 & 38.3 & 62.2 & 80.1 & 27.9 & 53.6 & 82.4 & 5.8 & 18.2 & 34.3 \\
    &  & 3 & 42.5 & 64.4 & 81.3 & 28.7 & 56.9 & 82.4 & 7.8 & 21.2 & 37.3 \\
\bottomrule
\end{tabular}
}
\end{table*}

\newpage
\section{Additional results on model scaling behavior}
\label{app:model-scaling}
\begin{table*}[ht!]
    \centering
\caption{\textbf{Scaling model size} Full results of scaling model size with n=1,2 and 4.}
\label{tab:model_scaling}
\begin{tabular}{llrrrrrr}
\toprule
 &  & \multicolumn{3}{c}{MBPP} & \multicolumn{3}{c}{HumanEval} \\
Model Size & Fut & @1 & @10 & @100 & @1 & @10 & @100 \\
\midrule
\multirow[c]{3}{*}{0.3B} & 1 & 1.8 & 10.4 & 29.9 & 1.9 & 5.0 & 10.9 \\
 & 2 & 1.7 & 10.1 & 27.2 & 1.5 & 4.4 & 10.3 \\
 & 4 & 1.0 & 6.3 & 20.1 & 1.2 & 4.0 & 8.6 \\
\cmidrule{1-8}
\multirow[c]{3}{*}{0.6B} & 1 & 4.7 & 21.0 & 45.2 & 2.9 & 8.5 & 16.7 \\
 & 2 & 4.6 & 21.0 & 44.7 & 3.2 & 8.9 & 16.2 \\
 & 4 & 3.0 & 15.6 & 38.0 & 2.7 & 7.7 & 15.5 \\
\cmidrule{1-8}
\multirow[c]{3}{*}{1.3B} & 1 & 6.8 & 27.0 & 51.0 & 4.6 & 13.1 & 24.3 \\
 & 2 & 7.3 & 27.5 & 51.7 & 5.4 & 13.6 & 23.3 \\
 & 4 & 7.4 & 27.6 & 50.1 & 4.8 & 12.3 & 22.5 \\
\cmidrule{1-8}
\multirow[c]{3}{*}{3B} & 1 & 11.1 & 36.4 & 60.4 & 7.2 & 17.2 & 29.8 \\
 & 2 & 11.8 & 37.2 & 60.5 & 8.0 & 18.2 & 31.2 \\
 & 4 & 12.7 & 37.6 & 61.1 & 7.2 & 18.5 & 33.3 \\
\cmidrule{1-8}
\multirow[c]{3}{*}{6.7B} & 1 & 23.9 & 54.2 & 74.7 & 12.8 & 29.3 & 51.7 \\
 & 2 & 24.7 & 54.8 & 76.4 & 13.2 & 32.2 & 53.9 \\
 & 4 & 26.0 & 55.8 & 76.0 & 13.8 & 33.2 & 58.5 \\
\cmidrule{1-8}
\multirow[c]{3}{*}{13B} & 1 & 26.0 & 57.1 & 77.0 & 14.1 & 33.6 & 56.0 \\
 & 2 & 30.5 & 60.5 & 79.4 & 15.2 & 36.9 & 60.0 \\
 & 4 & 30.5 & 61.0 & 79.2 & 15.8 & 38.6 & 63.5 \\
\bottomrule
\end{tabular}
\end{table*}

\section{Details on CodeContests finetuning}
\label{app:finetuning}
We use the Python subset of the CodeContests \citep{li2022competition} train split with reward annotations (``correct'' / ``incorrect'') and condition on correct solutions at evaluation time. For evaluation, we generate 1000 samples per problem from the test split for each temperature $T \in \{0.5, 0.6, 0.7, 0.8, 0.9\}$, and compute the unbiased estimator for pass@k  from \citet{chen2021evaluating} for each value of $k$ and $T$. It is possible that models that were pretrained with different losses have different respective optimal temperatures for pass@k, so we compute and show $k \mapsto \max_T \mathrm{pass\_at}(k, T)$ in Figure~\ref{fig:dm_contests}. In other words, we grant pass@k access to a temperature oracle. For small values of $k$, pass@k measures the capability of understanding and solving tasks while for large $k$, it additionally favors diversity in outputs. According to the results in Figure~\ref{fig:dm_contests}, multi-token prediction pretraining leads to finetuned models that are better on both axes.

\newpage
\section{Additional results on natural language benchmarks}
\label{app:choice_nlp}
We evaluate the models from Section~\ref{sect:nlp} on standard natural language processing benchmarks: ARC Challenge \citep{yadav2019quick}, COPA \citep{roemmele2011choice}, Hellaswag \citep{zellers2019hellaswag}, Natural Questions \citep{kwiatkowski2019natural}, PIQA \citep{bisk2019piqa}, SIQA \citep{sap2019socialiqa} and TriviaQA \citep{joshi2017triviaqa}.

\begin{figure}[h!]

    \centering
    \includegraphics[width=0.8\columnwidth]{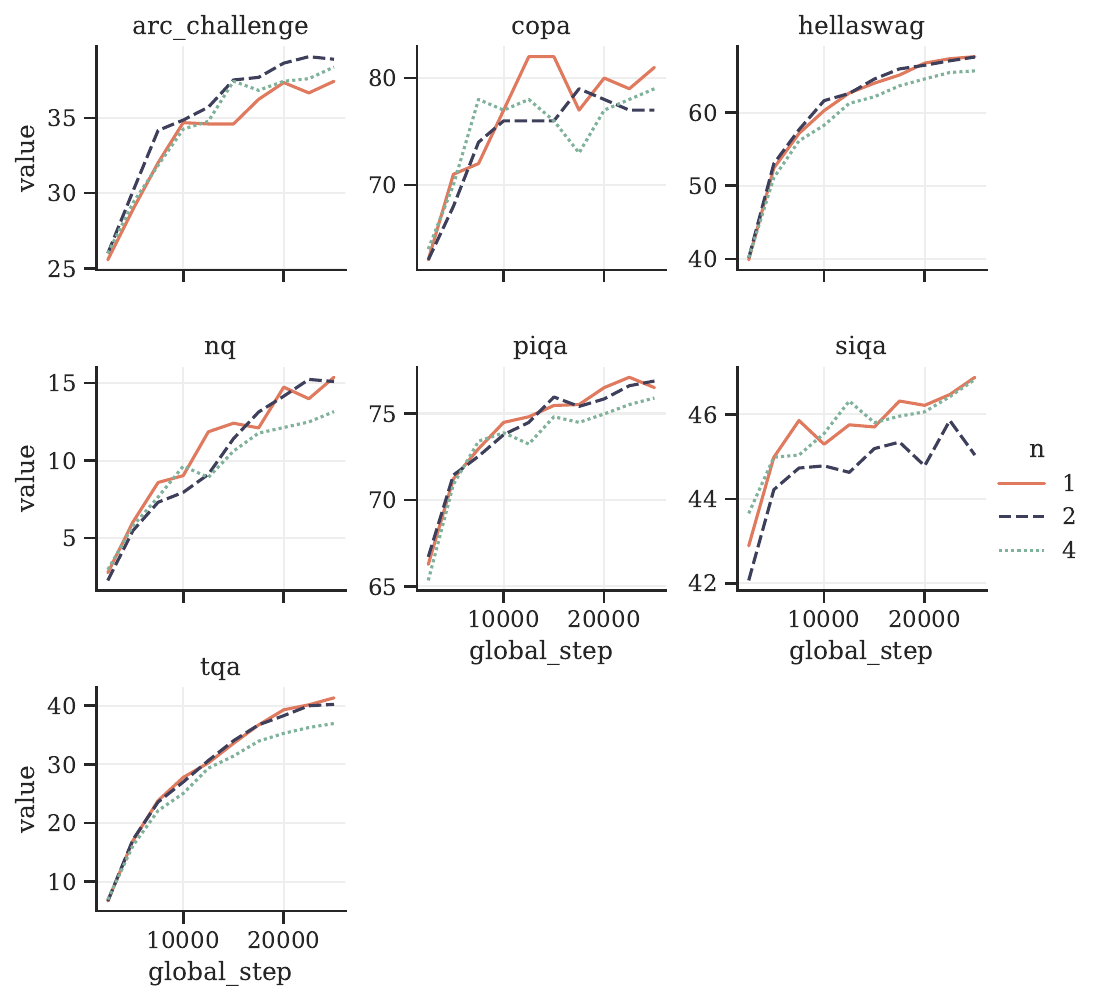}
    \caption{\textbf{Multiple token training with 7B models doesn't improve performance on choice tasks.} This figure shows the evolution of average accuracy of some standard NLP benchmarks (ARC Challenge COPA Hellaswag MMLU Natural Questions PIQA SIQA and TriviaQA. For the 7B models trained on 200B tokens of language data, the 2 future token model has the same performance as the baseline and the 4 future token model regresses a bit. Larger model sizes might be necessary to see improvements on these tasks.
    }
    \label{fig:nlp_evol_all}
\end{figure}

\newpage
\section{Additional results on abstractive text summarization}
\label{app:summarzation_all}
\begin{table*}[b!]
\centering
\caption{\textbf{Comprehensive evaluation on abstractive text summarization.} 
ROUGE-n (n-gram overlap) and ROUGE-L (longest common subsequence overlap) 
$F_1$ scores for 7B models trained on 200B and 500B tokens of natural language, respectively. The last three columns correspond to models trained on 500B tokens, the previous three to models trained on 200B tokens. Shown are numbers of the $n=1$ baseline and the absolute difference of $n=2$ and $n=4$ models trained on the same number of tokens. Summary-level ROUGE-L (``ROUGE-L\textsubscript{sum}'') is reported where it differs from ROUGE-L. Model checkpoints with maximal validation ROUGE-L $F_1$ are selected separately for each model dataset and model type and reported in the first row corresponding to each dataset. Boldface for numbers within 0.05 difference to the best one for each dataset size separately.}

\label{tab:summarization_all}
\resizebox{0.9\linewidth}{!}{
\begin{tabular}{llrrrrrr}
\toprule
      Task &        Metric &  Baseline 200B &  $\Delta_{n=2}$ &  $\Delta_{n=4}$ &  Baseline 500B &  $\Delta_{n=2}$ &  $\Delta_{n=4}$  \\
\midrule
\multirow{6}{*}{CNN/Dailymail \citep{nallapati2016abstractive}} 
 & \textit{evaluation epoch} &  2 &  2 &  2 & 2 & 2 & 2 \\
 &    ROUGE-1 & 42.88 &     \textbf{+0.74} &    \textbf{+0.74} & 43.77 &         \textbf{+0.55} &         \textbf{+0.50} \\
 &    ROUGE-2 & 19.56 &     \textbf{+0.52} &    \textbf{+0.53} &  20.34 &         \textbf{+0.52} &         +0.34 \\
 &    ROUGE-3 & 11.11 &     \textbf{+0.39} &    \textbf{+0.35} &  11.69 &         \textbf{+0.36} &         +0.19 \\
 &    ROUGE-L & 29.72 &     \textbf{+0.66} &     +0.49 &  30.51 &         \textbf{+0.48} &         +0.37 \\
 & ROUGE-Lsum & 40.18 &     \textbf{+0.72} &    \textbf{+0.68} &  41.02 &         \textbf{+0.56} &         \textbf{+0.52} \\
 \midrule
\multirow{5}{*}{Multi-News \citep{fabbri2019multinews}} 
 & \textit{evaluation epoch} & 1 &  3 &  3 &  2 &  3 &  2 \\
 &    ROUGE-1 & 44.48 &     \textbf{+1.70} &     \textbf{+1.72} &  45.87 &         \textbf{+1.05} &         +0.69 \\
 &    ROUGE-2 & 16.88 &     +0.44 &     \textbf{+0.70} &  17.56 &         \textbf{+0.42} &         \textbf{+0.40} \\
 &    ROUGE-3 &  9.63 &     -0.06 &     \textbf{+0.17} &   9.91 &         \textbf{+0.22} &         \textbf{+0.18} \\
 &    ROUGE-L & 23.82 &     +0.17 &     \textbf{+0.40} &  24.22 &         +0.20 &         \textbf{+0.26} \\
 \midrule
\multirow{5}{*}{OrangeSum \citep{eddine2021barthez}} 
 & \textit{evaluation epoch} &  2 &  2 &  3 &  2 &  1 &  3 \\
 &    ROUGE-1 & 32.95 &     \textbf{+0.41} &     +0.35 &  33.37 &         +0.32 &         \textbf{+0.78} \\
 &    ROUGE-2 & 13.90 &     \textbf{+0.31} &     \textbf{+0.36} &  14.22 &         +0.25 &         \textbf{+0.53} \\
 &    ROUGE-3 &  8.01 &     \textbf{+0.19} &     \textbf{+0.21} &   8.12 &         +0.22 &         \textbf{+0.48} \\
 &    ROUGE-L & 23.62 &     +0.36 &     \textbf{+0.51} &  23.91 &         +0.23 &         \textbf{+0.66} \\
 \midrule
\multirow{5}{*}{pn-summary \citep{Farahani_2021} } 
 & \textit{evaluation epoch} &  1 &  1 &  1 &  1 &  2 &  3 \\
 &    ROUGE-1 &  \textbf{1.03} &     \textbf{+0.02} &     \textbf{0.00} &   0.92 &         \textbf{+0.09} &         \textbf{+0.05} \\
 &    ROUGE-2 &  \textbf{0.13} &     \textbf{+0.02} &     \textbf{+0.03} &   \textbf{0.15} &         \textbf{0.00} &         \textbf{0.00} \\
 &    ROUGE-3 &  \textbf{0.02} &      \textbf{0.00} &     \textbf{+0.02} &   \textbf{0.02} &         \textbf{0.00} &         \textbf{+0.02} \\
 &    ROUGE-L &  \textbf{1.02} &     \textbf{+0.03} &     \textbf{+0.01} &   0.91 &         \textbf{+0.09} &         \textbf{+0.05} \\
 \midrule
\multirow{5}{*}{SAMSum \citep{Gliwa_2019}} 
 & \textit{evaluation epoch} & 3 & 3 & 3 &  3 &  3 &  3 \\
 &    ROUGE-1 & 51.39 &     \textbf{+0.70} &     +0.63 &  52.54 &        -0.24 &         \textbf{+0.69}  \\
 &    ROUGE-2 & 26.46 &     \textbf{+0.76} &     +0.30 &  27.74 &        -0.20 &         \textbf{+0.82} \\
 &    ROUGE-3 & 16.40 &     \textbf{+0.91} &     +0.28 &  17.56 &        -0.30 &         \textbf{+0.71} \\
 &    ROUGE-L & 42.59 &     \textbf{+0.90} &     +0.51 &  43.92 &        -0.10 &         \textbf{+0.63} \\
 \midrule
\multirow{5}{*}{ThaiSum \citep{chumpolsathien_2020}} 
 & \textit{evaluation epoch} & 2 & 3 & 3 &  3 &  3 &  3 \\
 &    ROUGE-1 & 45.08 &     +0.63 &     \textbf{+1.12} &  45.48 &         +0.77 &         \textbf{+0.91} \\
 &    ROUGE-2 & 27.85 &     +0.30 &     \textbf{+0.73} &  28.07 &         \textbf{+0.74} &         +0.64 \\
 &    ROUGE-3 & 15.73 &     +0.04 &     \textbf{+0.43} &  15.82 &         \textbf{+0.50} &         +0.30 \\
 &    ROUGE-L & 44.92 &     +0.64 &     \textbf{+1.12} &  45.31 &         +0.76 &         \textbf{+0.89} \\
 \midrule
\multirow{5}{*}{WikiSummary \citep{Bert2BertWikiSummaryPersian}} 
 & \textit{evaluation epoch} & 3 & 3 & 3 &  3 &  3 &  3 \\
 &    ROUGE-1 & 10.16 &    \textbf{+0.67} &    -0.23 &  \textbf{12.80} &        -0.17 &        -0.99 \\
 &    ROUGE-2 &  \textbf{4.46} &    -0.03 &    -0.09 &   \textbf{6.17} &        -0.11 &        -0.69 \\
 &    ROUGE-3 &  1.31 &    \textbf{+0.21} &    +0.13 &   \textbf{1.98} &        -0.08 &        -0.33 \\
 &    ROUGE-L & 10.11 &    \textbf{+0.65} &    -0.28 &  \textbf{12.69} &        -0.17 &        -0.99 \\
 \midrule
\multirow{5}{*}{XSum \citep{narayan2018dont}} 
 & \textit{evaluation epoch} & 2 & 2 & 3 &  2 &  2 &  3 \\
 &    ROUGE-1 & 42.16 &     +0.71 &     \textbf{+1.07} &  43.42 &         \textbf{+0.78} &         +0.67 \\
 &    ROUGE-2 & 19.19 &     \textbf{+0.54} &     \textbf{+0.55} &  20.32 &         \textbf{+0.68} &         +0.34 \\
 &    ROUGE-3 & 10.43 &     \textbf{+0.38} &     +0.28 &  11.23 &         \textbf{+0.48} &         +0.20 \\
 &    ROUGE-L & 34.03 &     +0.67 &     \textbf{+0.92} &  35.18 &         \textbf{+0.79} &         +0.63 \\
\bottomrule
\end{tabular}
}

\end{table*}

In this section, we report comprehensive evaluation results on summarization tasks for the 7B parameter models trained on 200B and 500B tokens of natural language from Section~\ref{sect:nlp}.

\begin{table}[ht!] 
\centering
\caption{\textbf{Performance on abstractive text summarization.} ROUGE-L (longest common subsequence overlap) $F_1$ score for 7B models trained on 200B and 500B tokens of natural language. We finetune the respective models on each task's training data separately for a given number of epochs and select the checkpoints with maximal ROUGE-L $F_1$ on the validation dataset. The second and fifth column report the numbers for a next-token prediction model, while the third, fourth, sixth and seventh one report the absolute improvements for 2-token and 4-token prediction models trained on the same amount of data, respectively.
Boldface for numbers within 0.05 difference to the best one for each dataset size separately.}
\label{tab:summarization}
\begin{tabular}{lrrrrrr}
\toprule
      Dataset &     Baseline 200B &  $\Delta_{n=2}$ &  $\Delta_{n=4}$ & Baseline 500B & $\Delta_{n=2}$ & $\Delta_{n=4}$ \\
\midrule
CNN/Dailymail &  29.72 &         \textbf{+0.66} &         +0.49 &  30.51 &         \textbf{+0.48} &         +0.37 \\
Multi-News &  23.82 &         +0.17 &         \textbf{+0.40} &  24.22 &         +0.20 &         \textbf{+0.26} \\
OrangeSum &  23.62 &         +0.36 &         \textbf{+0.51} &  23.91 &         +0.23 &         \textbf{+0.66} \\
pn-summary &   \textbf{1.02} &         \textbf{+0.03} &         \textbf{+0.01} &   0.91 &         \textbf{+0.09} &         \textbf{+0.05} \\
SAMSum &  42.59 &         \textbf{+0.90} &         +0.51 &  43.92 &        -0.10 &         \textbf{+0.63} \\
ThaiSum &  44.92 &         +0.64 &         \textbf{+1.12} &  45.31 &         +0.76 &         \textbf{+0.89} \\
WikiSummary &  10.11 &         \textbf{+0.65} &        -0.28 &  \textbf{12.69} &        -0.17 &        -0.99 \\
XSum &  34.03 &         +0.67 &         \textbf{+0.92} &  35.18 &         \textbf{+0.79} &         +0.63 \\
         \midrule
\emph{Average} &  26.23 & \textbf{+0.51} & \textbf{+0.46} & 27.08 & \textbf{+0.28} & \textbf{+0.31} \\
\bottomrule
\end{tabular}
\end{table}

\begin{table*}[ht!]
\centering
\caption{\textbf{Summary statistics for abstractive text summarization evaluations.} 
Reported are averages for ROUGE-n and ROUGE-L metrics across all datasets from Table~\ref{tab:summarization_all}, separately for precision, recall and $F_1$ score. Both 2-token and 4-token prediction models outperform the next-token prediction baseline. Trained on 500B tokens, 4-token prediction models appear better at recall metrics while 2-token prediction models appear better at precision metrics. Model checkpoints are selected as described in Table~\ref{tab:summarization_all}. Boldface for numbers within 0.05 difference to the best one for each dataset size separately.}
\label{tab:summarization_avgs}
\begin{tabular}{llrrrrrr}
\toprule
Metric & Aspect & Baseline 200B & $\Delta_{n=2}$ & $\Delta_{n=4}$ & Baseline 500B & $\Delta_{n=2}$ & $\Delta_{n=4}$ \\
\midrule
\multirow{3}{*}{ROUGE-1} & $F_1$ &  33.77 &         \textbf{+0.70} &         \textbf{+0.68} &  34.77 &         \textbf{+0.39} &         \textbf{+0.41} \\
           & precision &  35.76 &         \textbf{+0.88} &         \textbf{+0.83} &  37.03 &         \textbf{+0.42} &        -0.04 \\
           & recall &  34.37 &         \textbf{+0.45} &         \textbf{+0.45} &  35.14 &         +0.35 &         \textbf{+0.68} \\
\midrule
\multirow{3}{*}{ROUGE-2} & $F_1$ &  16.06 &         \textbf{+0.36} &         \textbf{+0.39} &  16.82 &         \textbf{+0.29} &         \textbf{+0.30} \\
           & precision &  16.97 &         \textbf{+0.40} &         \textbf{+0.43} &  17.91 &         \textbf{+0.29} &         +0.03 \\
           & recall &  16.34 &         +0.28 &         \textbf{+0.35} &  16.99 &         +0.32 &         \textbf{+0.48} \\
\midrule
\multirow{3}{*}{ROUGE-3} & $F_1$ &   9.08 &         \textbf{+0.26} &         \textbf{+0.23} &   9.54 &         \textbf{+0.18} &         \textbf{+0.22} \\
           & precision &   9.59 &         \textbf{+0.29} &         \textbf{+0.28} &  10.17 &         \textbf{+0.18} &         +0.05 \\
           & recall &   9.26 &         \textbf{+0.21} &         \textbf{+0.20} &   9.65 &         +0.21 &         \textbf{+0.35} \\
\midrule
\multirow{3}{*}{ROUGE-L} & $F_1$ &  26.23 &         \textbf{+0.51} &         \textbf{+0.46} &  27.08 &         \textbf{+0.28} &         \textbf{+0.31} \\
           & precision &  27.79 &         \textbf{+0.62} &         +0.55 &  28.85 &         \textbf{+0.28} &        -0.09 \\
           & recall &  26.71 &         \textbf{+0.37} &         \textbf{+0.32} &  27.40 &         +0.28 &         \textbf{+0.57} \\
\midrule
\multirow{3}{*}{ROUGE-L\textsubscript{sum}} & $F_1$ &  27.53 &         \textbf{+0.52} &         \textbf{+0.48} &  28.40 &         \textbf{+0.29} &         \textbf{+0.33} \\
           & precision &  29.07 &         \textbf{+0.64} &         +0.58 &  30.15 &         \textbf{+0.29} &        -0.08 \\
           & recall &  28.13 &         \textbf{+0.35} &         \textbf{+0.33} &  28.81 &         +0.29 &         \textbf{+0.60} \\
\bottomrule
\end{tabular}
\end{table*}

\clearpage
\section{Additional results on mathematical reasoning in natural language}
\label{app:gsm8k}
\begin{figure}[h!]
    \centering
    \includegraphics[width=0.8\columnwidth]{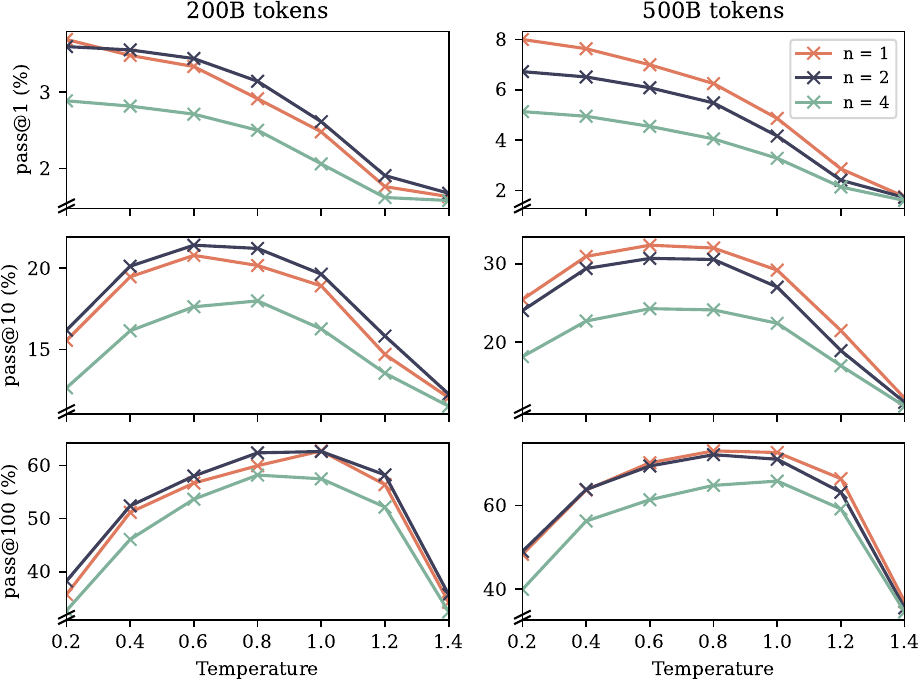}
    \caption{\textbf{Performance on the mathematical reasoning benchmark GSM8K \citep{cobbe2021training}.} We evaluate pretrained next-token and multi-token prediction models trained on 200B and 500B tokens of natural language in 8-shot mode using nucleus sampling \citep{holtzman2020curious} with probability mass 0.95 and various sampling temperatures. Reported are the frequencies of the correct final answer to appear among $k$ samples, for $k=1, 10, 100$, estimated from 200 samples like in code generation benchmarks \citep{chen2021evaluating}. After 200B tokens, the 2-token prediction model has a clear advantage over the next-token baseline but the order reverses after 500B tokens. The 4-token prediction model is worse throughout. We interpret this similarly to the findings in Section~\ref{sect:induction}: the follow-your-nose chains-of-thought required for GSM8K may be difficult to learn from a limited amount of data, attesting to the data efficiency of multi-token prediction training. Once the correct circuits for correct autoregressive chains-of-thought in this domain have formed, however, multi-token prediction comes at a cost.
    }
    \label{fig:gsm8k}
\end{figure}

\newpage
\section{Additional results on induction learning}
\label{app:induction}
\begin{figure}[h!]
    \centering
    \includegraphics{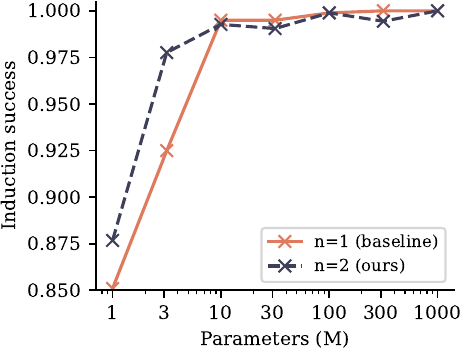}
    \caption{\textbf{Induction capability of $n$-token prediction models trained on higher-quality data.}
    Shown is accuracy on the second token of two token names that have already been mentioned previously. Training on a 9:1 mix of a books dataset and the children storiy dataset, we observe that induction capability forms significantly earlier in training (not shown here) and to a higher degree. We believe that this is explained both because our evaluation dataset no longer contains out-of-distribution tokens (Section~\ref{sect:induction}) and because the higher-quality data contained in the books dataset makes induction necessary earlier on (especially for small models, cf. \citet{singh2023transient}). In particular, by enforcing the formation of induction capability in the model by means of the dataset -- instead of the loss -- the advantage of 2-token prediction models on this task disappears except for the smallest models: feature learning converts the task into a pure next-token prediction task.
    }
    \label{fig:induction_b3g}
\end{figure}

\newpage
\section{Additional results on algorithmic reasoning}
\label{app:poly}
We investigate the following \emph{computation-sharing hypothesis} for explaining the efficacy of multi-token prediction as training loss.
\begin{quote}
    The prediction difficulty of different tokens in natural text varies greatly. Some tokens may be the continuations of partial words that are uniquely determined from their preceding context without any effort, while others may require to predict theorem names in difficult mathematical proofs or the correct answer to an exam question. Language models with residual connections have been shown to refine their output token distribution with each successive layer, and can be trained with early exit strategies that spend variable amounts of computational resources per token position. Multi-token prediction losses explicitly encourage information-sharing between adjacent token positions and can thus be viewed as a method to learn allocating computational resources in language models more efficiently to the tokens that benefit most of it.
\end{quote}
To check the truth of this hypothesis, we augment the polynomial arithmetic task from Section~\ref{sect:algorithmic} with a varying number of \emph{pause tokens} \citep{goyal2023think} inserted between the question and a token that denotes the beginning of the answer. Pause tokens introduce additional computational resources that can be expended for computations that are expected to be useful later on in the sequence, in other words: to start thinking about the answer. According to the \emph{computation-sharing hypothesis}, multi-token prediction models learn information-sharing and thus computation-sharing between token positions more easily, and may be better at making use of these additional computational resources than next-token prediction models are. In Figure~\ref{fig:poly_pause}, we show the evaluation results on the polynomial arithmetic task with a fixed number of pause tokens inserted both at training and evaluation time. Multi-token prediction models likewise outperform next-token prediction models on these task variants across task difficulties and model sizes. However, we do not see strong evidence of a widening or shrinking of this gap i.e. we cannot conclude from these experiments on the veracity of the computation-sharing hypothesis.

In Table~\ref{tab:pause}, we report results from another experiment in the same spirit: by adding spaces and newlines to HumanEval and MBPP prompts, we add ``pause tokens'' in a somewhat natural way. According to these results, multi-token prediction models have a slight advantage at using this additionally provided compute, but the effect is marginal. 

\begin{figure*}[b!]
    \begin{subfigure}{0.5\textwidth}
        \centering
        \includegraphics{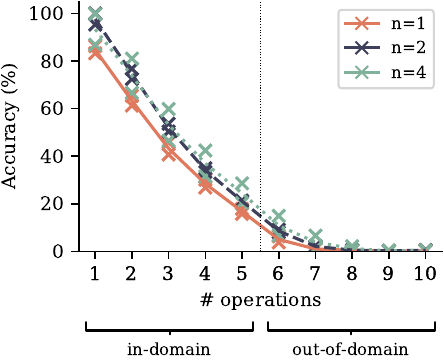}
        \label{fig:poly_5}
        \caption{5 pause tokens}
    \end{subfigure}
    \hfill
    \begin{subfigure}{0.5\textwidth}
        \centering
        \includegraphics{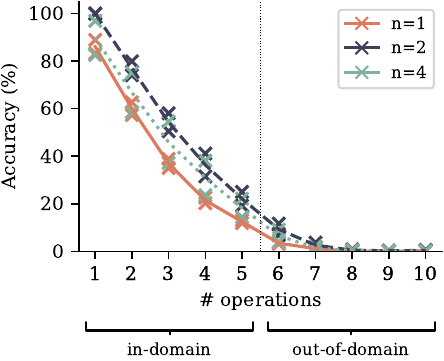}
        \label{fig:poly_10}
        \caption{10 pause tokens}
    \end{subfigure}

    \caption{\textbf{Accuracy on a polynomial arithmetic task with varying number of operations per expression and pause tokens.} We train and evaluate models on the polynomial arithmetic task described in Section~\ref{sect:algorithmic}, modified by the addition of \emph{pause tokens} \citep{goyal2023think}: between the question and the equality sign that indicates the beginning of the answer, we add a constant number of pause tokens both in training and evaluation. For both a variant with five and with ten pause tokens, respectively, we observe comparable improvements from using multi-token prediction to the ones obtained in the case without pause tokens (Figure~\ref{fig:poly_0}).
    }
    \label{fig:poly_pause}
\end{figure*}

\begin{figure*}[b!]
    \includegraphics{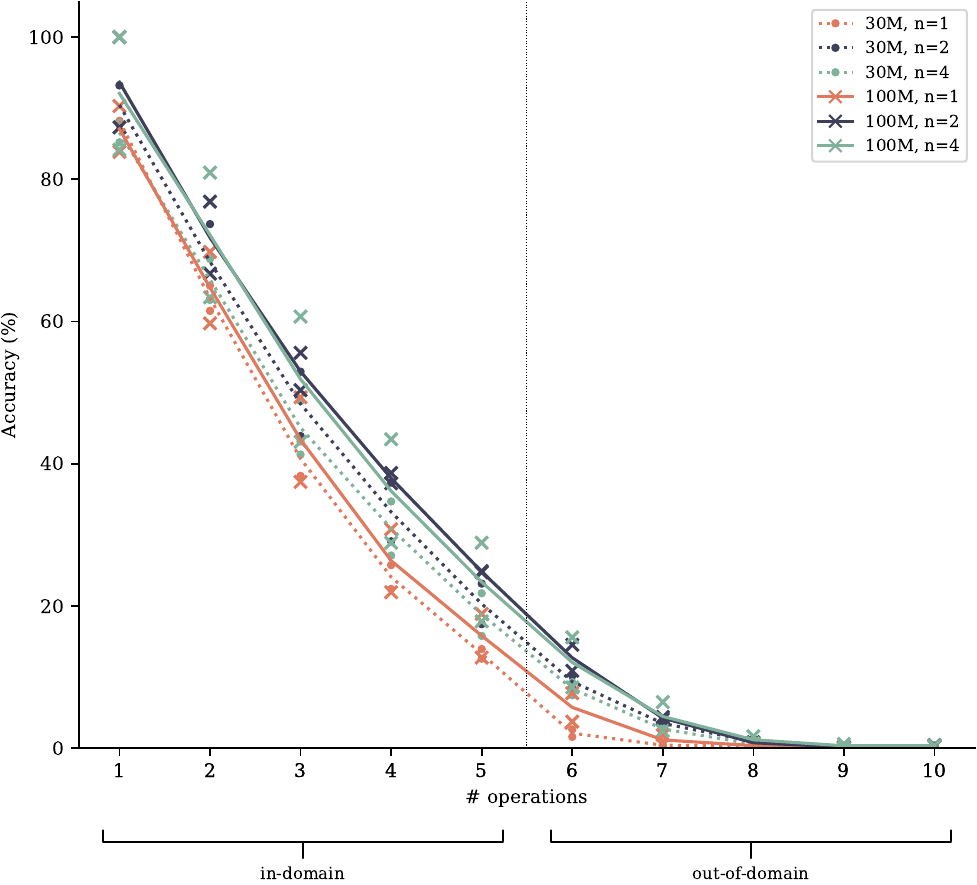}

    \caption{\textbf{Accuracy on a polynomial arithmetic task for two model sizes.} We train and evaluate models with 30M and 100M parameters on the polynomial arithmetic task described in Section~\ref{sect:algorithmic}. Tripling the model size has a smaller effect on performance than replacing next-token prediction loss by multi-token prediction. Shown are two independent runs per configuration and their means, the 100M parameter models being identical to the ones in Figure~\ref{fig:poly_0}.
    }
    \label{fig:poly_scale}
\end{figure*}

\begin{table}[H]
\centering
\caption{\textbf{Utilization of additional whitespace tokens in code benchmarks.}}
\label{tab:pause}
\begin{tabular}{llrr}
\toprule
Task  & Whitespace & $n = 1$      &  $n = 4$     \\
\midrule
APPS/Intro & spaces + newline &  +0.21 &  \textbf{+0.34} \\
APPS/Intro & newline  &  \textbf{+0.79} &  +0.69 \\
HumanEval & spaces + newline & -0.72 & \textbf{-0.16} \\
HumanEval & newline  & -0.26 &  \textbf{+0.10} \\
MBPP & spaces + newline & -0.10 & \textbf{-0.06} \\
MBPP & newline  &  \textbf{+0.03} & -0.08 \\
\midrule
\emph{Average} & & -0.01 & \textbf{+0.14} \\
\bottomrule
\end{tabular}
\end{table}

\newpage
\begin{table*}[ht!]
    \centering
    \caption{\textbf{Optimal temperatures for all numbers in table ~\ref{tab:future_tokens_results}} }
    \label{tab:optimal_temps}
\begin{tabular}{cccccccccccc}
\toprule
  \multirow[c]{2}{*}{Training data} & \multirow[c]{2}{*}{Vocabulary} & \multirow[c]{2}{*}{n} & \multicolumn{3}{c}{MBPP} & \multicolumn{3}{c}{HumanEval} & \multicolumn{3}{c}{APPS/Intro} \\
 \cmidrule(l){4-6}\cmidrule(l){7-9} \cmidrule(l){10-12}
 &  &  & @1 & @10 & @100 & @1 & @10 & @100 & @1 & @10 & @100 \\
\midrule
\multirow[c]{4}{*}{\shortstack{313B bytes\\(0.5 epochs)}} & \multirow[c]{4}{*}{bytes} & 1 & 0.2 & 0.8 & 0.8 & 0.1 & 0.8 & 0.8 & 0.8 & 0.8 & 0.8 \\
 &  & 8 & 0.1 & 0.8 & 0.8 & 0.1 & 0.8 & 0.8 & 0.4 & 0.4 & 0.4 \\
 &  & 16 & 0.1 & 0.8 & 0.8 & 0.1 & 0.8 & 0.8 & 0.4 & 0.4 & 0.4 \\
 &  & 32 & 0.1 & 0.4 & 0.8 & 0.1 & 0.4 & 0.8 & 0.1 & 0.4 & 0.4 \\
 \multirow[c]{5}{*}{\shortstack{200B tokens\\(0.8 epochs)}} & \multirow[c]{5}{*}{32k tokens} & 1 & 0.1 & 0.8 & 0.8 & 0.1 & 0.8 & 0.8 & 0.1 & 0.4 & 0.8 \\
 &  & 2 & 0.1 & 0.8 & 0.8 & 0.2 & 0.8 & 0.8 & 0.4 & 0.4 & 0.8 \\
 &  & 4 & 0.1 & 0.8 & 0.8 & 0.1 & 0.8 & 0.8 & 0.2 & 0.8 & 0.8 \\
 &  & 6 & 0.1 & 0.8 & 0.8 & 0.2 & 0.8 & 0.8 & 0.4 & 0.4 & 0.8 \\
 &  & 8 & 0.1 & 0.8 & 0.8 & 0.1 & 0.8 & 0.8 & 0.2 & 0.4 & 0.8 \\
\multirow[c]{2}{*}{\shortstack{1T tokens\\(4 epochs)}} & \multirow[c]{2}{*}{32k tokens} & 1 & 0.1 & 0.8 & 0.8 & 0.1 & 0.8 & 0.8 & 0.1 & 0.4 & 0.8 \\
 &  & 4 & 0.1 & 0.8 & 0.8 & 0.2 & 0.8 & 0.8 & 0.4 & 0.8 & 0.8 \\
\bottomrule
\end{tabular}
\end{table*}

\section{Additional intuitions on multi-token prediction}
\label{app:intuitions}

\subsection{Comparison to scheduled sampling}
In Section~\ref{sect:mismatch}, we argued that multi-token prediction reduces the distribution mismatch between teacher-forced training and autoregressive evaluation of language models. Scheduled sampling \citep{bengio2015scheduled} is a curriculum learning method that likewise aims to bridge this gap in sequence prediction tasks by gradually replacing more and more input tokens with model-generated ones.

While effective in areas such as time series forecasting, scheduled sampling is, in our opinion, inapplicable to language modelling due to the discrete nature of text. Replacing ground truth input sequences by interleavings of ground truth and model-generated tokens frequently results in ungrammatical, factually wrong or otherwise incoherent text, which should be avoided at all cost. Moreover, unlike multi-token prediction, the technique originally developed for recurrent neural networks cannot easily be adapted for parallel training setups like the ones of transformer models.

\subsection{Information-theoretic argument}
\label{app:loss-dec}
We give details on the information-theoretic terms appearing in the decomposition in Section~\ref{sect:mismatch} and derive a relative version that similarly allows to decompose multi-token prediction losses. As in Section~\ref{sect:mismatch}, denote by $X$ the next token and by $Y$ the second-next one, and omit conditioning on the preceding context $C$ for ease of notation. In Section~\ref{sect:mismatch}, we decomposed $H(X) + H(Y)$---the quantity of interest for 2-token prediction models---as follows:
\begin{equation} \label{eq:entr-dec}
  H(X) + H(Y) = H(X \mid Y) + 2 I(X; Y) + H(Y \mid X).
\end{equation}
Let us explain each of the terms. The entropy terms denote the uncertainty contained in the ground-truth random variables $X$ and $Y$. \footnote{In particular, they do not refer to \emph{model} predictions.} The term $H(Y \mid X)$ is a classical next-token entropy for the prefix $(C, X)$. The conditional entropy $H(X \mid Y)$ is a more theoretical entity not modelled by causal models. It describes the uncertainty about $X$ given the prefix $C$ and suffix $Y$, and therefore captures the local variations of $X$ that do not affect the continuation of the text $Y$. The mutual information $I(X;Y)$ on the other hand describes the information about $Y$ contained in $X$ (and vice versa) and therefore captures the variations of $X$ which constrain the continuation of the text.

However, the argument given in Section~\ref{sect:mismatch} relies on the assumption that multi-token prediction losses obey a similar decomposition as the sum of the ground-truth entropies themselves. Let us make this rigorous. Denote by $p(x, y)$ the joint distribution of $X$ and $Y$, by $p(x)$ (short for $p_X(x)$) the marginal distribution of $X$ and by $p(y)$ the one of $Y$. Denote the densities of the model's predictions by $q(x, y)$, $q(x)$ and $q(y)$, respectively, conditional distributions by $p(x \mid y)$ and Kullback-Leibler divergence from $q$ to $p$ by $\KL{p}{q}$ and cross-entropy from $q$ to $p$ by $H(p, q)$.
\begin{definition}
    The \emph{conditional cross-entropy} $H(p_{X \mid Y}, q_{X \mid Y})$ of $X$ conditioned on $Y$ from $q$ to $p$ is defined as the expectation under $y$ of the cross-entropy between the distributions $p_X$ and $q_X$ conditioned on $y$, in formulas:
    \[
    H(p_{X \mid Y}, q_{X \mid Y})
    = \E_{y \sim p_Y} H(p_{X \mid Y=y}, q_{X \mid Y=y})
    = \E_{y \sim p_Y} H(p(\cdot \mid y), q(\cdot \mid y)).
    \]
\end{definition}
\begin{definition}
    The \emph{relative mutual information} $I_{p \| q}(X; Y)$ of $X$ and $Y$ from $q$ relative to $p$ is defined by
    \[
    I_{p \| q}(X; Y)
    = \KL{p}{q_X \otimes q_Y} - \KL{p}{q}.
    \]
\end{definition}
We have $I_{p \| q}(X; Y) = H(p_X, q_X) + H(p_Y, q_Y) - H(p, q)$, $I_{p \| p}(X; Y) = I_p(X; Y)$ reduces to standard mutual information under the distribution $p$ and $I_{p \| q}(X; Y)$ is symmetric in $X$ and $Y$ but can be negative.

We have the following relative version of the decomposition $H(X) = H(X \mid Y) + I(X; Y)$.
\begin{lemma}
    $H(p_X, q_X) = H(p_{X \mid Y}, q_{X \mid Y}) + I_{p \| q}(X; Y).$
\end{lemma}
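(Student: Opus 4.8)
The plan is to reduce the claimed identity to the chain rule for cross-entropy. First I would invoke the algebraic identity recorded just below the definition of relative mutual information, namely $I_{p\|q}(X;Y) = H(p_X,q_X) + H(p_Y,q_Y) - H(p,q)$ (itself immediate from expanding the two Kullback--Leibler divergences in the definition of $I_{p\|q}$ and cancelling the common $-\sum p\log p$ terms). Substituting this into the right-hand side of the lemma, the goal becomes
\[
H(p_X,q_X) = H(p_{X\mid Y},q_{X\mid Y}) + H(p_X,q_X) + H(p_Y,q_Y) - H(p,q),
\]
so after cancelling $H(p_X,q_X)$ on both sides it suffices to prove the cross-entropy chain rule $H(p,q) = H(p_{X\mid Y},q_{X\mid Y}) + H(p_Y,q_Y)$.

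Next I would establish this chain rule directly. Writing the model joint density as $q(x,y) = q(y)\,q(x\mid y)$ and taking negative logarithms gives $-\log q(x,y) = -\log q(x\mid y) - \log q(y)$ pointwise. Taking the expectation of both sides under the \emph{ground-truth} joint law $p(x,y)$ yields
\[
H(p,q) = -\,\E_{(x,y)\sim p}\log q(x\mid y)\;-\;\E_{(x,y)\sim p}\log q(y).
\]
The second term equals $H(p_Y,q_Y)$ after marginalising out $x$ via $\sum_x p(x,y) = p(y)$; the first term is, by the definition of conditional cross-entropy, $\E_{y\sim p_Y}\,\E_{x\sim p(\cdot\mid y)}[-\log q(x\mid y)] = H(p_{X\mid Y},q_{X\mid Y})$. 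Substituting these two evaluations back into the display above gives the chain rule, and hence the lemma.

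I do not expect a real obstacle here beyond bookkeeping. The one point requiring care is keeping track of which distribution each expectation is taken against: every outer and inner expectation is under the true law $p$ (and its marginal $p_Y$ and conditionals $p(\cdot\mid y)$), while $q$ appears only inside the logarithms. If one wants full rigour, one should note that these manipulations need $p \ll q$ so that the cross-entropies and $H(p_{X\mid Y},q_{X\mid Y})$ are finite, and that the factorisation $q(x,y) = q(y)\,q(x\mid y)$ is used only where $q(y)>0$, which under that absolute-continuity assumption covers the support of $p_Y$; in the finite-vocabulary setting relevant to language modelling these caveats are automatic.
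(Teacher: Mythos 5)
Your proof is correct. It takes a mildly different route from the paper's: you first invoke the identity $I_{p \| q}(X; Y) = H(p_X, q_X) + H(p_Y, q_Y) - H(p, q)$ (which the paper states right after the definition but does not actually use in its proof) to reduce the lemma to the cross-entropy chain rule $H(p, q) = H(p_{X \mid Y}, q_{X \mid Y}) + H(p_Y, q_Y)$, which you then verify by factoring $q(x,y) = q(y)\, q(x \mid y)$ under the $p$-expectation. The paper instead runs a single direct computation: it expands $H(p_X, q_X) = -\sum_{x,y} p(x,y) \log q(x)$ and inserts the telescoping factorization $q(x) = \frac{q(x)q(y)}{p(x,y)} \cdot \frac{p(x,y)}{q(x,y)} \cdot \frac{q(x,y)}{q(y)}$ inside the logarithm, reading off $\KL{p}{q_X \otimes q_Y} - \KL{p}{q} = I_{p \| q}(X;Y)$ and $H(p_{X \mid Y}, q_{X \mid Y})$ from the three resulting sums. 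The two arguments are the same bookkeeping arranged differently; yours is more modular and makes the structural content explicit (chain rule plus the cross-entropy form of $I_{p \| q}$), whereas the paper's is self-contained in one display and works directly from the KL-divergence form of the definition. Your closing caveats about $p \ll q$ and $q(y) > 0$ on the support of $p_Y$ are apt and, as you note, automatic in the finite-vocabulary setting the paper intends.
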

\begin{proof}
We calculate
\begin{align*}
    H(p_X, q_X)
    &= -\sum_x p(x) \log q(x) \\
    &= -\sum_{x,y} p(x, y) \log q(x) \\
    &= -\sum_{x,y} p(x, y) \log \frac{q(x)q(y)}{p(x, y)}\frac{p(x, y)}{q(x, y)}\frac{q(x, y)}{q(y)}  \\
    &= \KL{p}{q_X \otimes q_Y} - \KL{p}{q} - \sum_{x,y} p(y) p(x \mid y) \log q(x \mid y) \\
    &= I_{p \| q}(X; Y) + \sum_y p(y) H(p_{X\mid y}, q_{Y\mid y}) \\
    &= I_{p \| q}(X; Y) + H(p_{X \mid Y}, q_{X \mid Y}).
\end{align*}
\end{proof}
Symmetrizing, we get the desired relative version of $H(X) + H(Y) = H(X \mid Y) + 2 I(X; Y) + H(Y \mid X)$:
\[
    H(p_X, q_X) + H(p_Y, q_Y)
    = H(p_{X \mid Y}, q_{X \mid Y}) + 2 I_{p \| q}(X; Y) + H(p_{Y \mid X}, q_{Y \mid X}).
\]
Setting $p$ to be the empirical distribution of the training data, the left-hand side describes the cross-entropy loss used to train 2-token prediction models. The right-hand side gives the decomposition into a \emph{local} cross-entropy term, a mutual information term with weight two and a shifted next-token cross-entropy term. We interpret this as follows: by adding the term $H(p_Y, q_Y)$ to the loss, 2-token prediction incentivizes models to precompute features which will become useful for predicting $Y$ in the next step and increases the weight of the relative mutual information term in the loss. What does relative mutual information actually mean? By interpreting Kullback-Leibler divergence $\KL{p}{q}$ as the average number of bits needed in addition to send data from $p$ with a code optimized for $q$ instead of $p$, we see that minimizing
\[
I_{p \| q}(X; Y) = \KL{p}{q_X \otimes q_Y} - \KL{p}{q}
\]
means minimizing the average number of additional bits needed to send data from $p$ with a code optimized for $q$ that treats $X$ and $Y$ as independent compared to one that does not. If this number is small, $q$ managed to exploit the mutual information of $X$ and $Y$ under $p$.

\subsection{Lookahead reinforces choice points}
\label{app:choice}
\begin{figure}[ht!]
    \centering
    \includegraphics[width=0.5\linewidth]{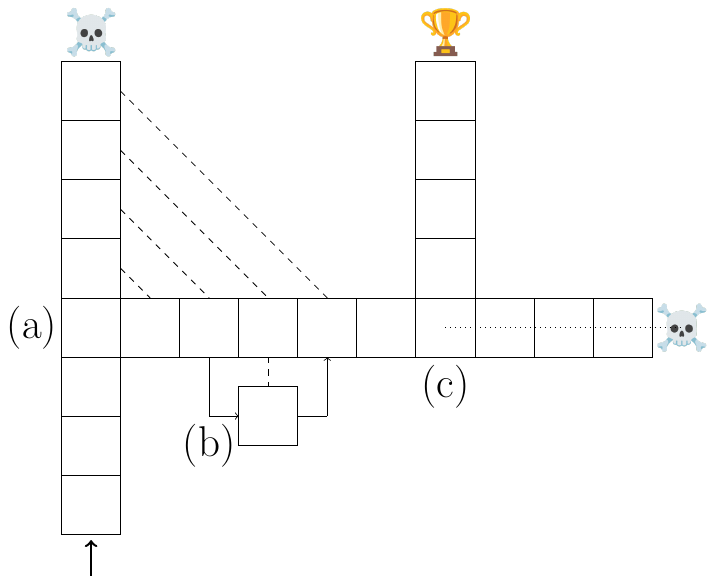}
    \caption{\textbf{Example of a sequential prediction task with derailing.}
    The goal is to go from the arrow to the trophy. Turning around is not allowed. Most transitions are unique, but there are two turns to be taken correctly, the \emph{consequential decisions} (a) and (c). Turn (b) is an \emph{inconsequential decision}: the paths join right after it. Next to transitions (a) and (b), we sketch how a 4-step prediction loss can place more emphasis on consequential transitions than inconsequential ones during teacher-forced training. Next to transition (c), we sketch how a 4-step lookahead can prevent models from taking irreversible suboptimal decisions during autoregressive decoding.
    }
    \label{fig:maze}
\end{figure}

Training with multi-head prediction increases the importance of choice points in the loss in comparison to inconsequential decisions. To make this argument, we present a simplified model of language modelling. Consider a sequential decision task and a model $M$ that is trained in a teacher-forced way on optimal trajectories. We distinguish \emph{choice points} --transitions that lead to different outcomes -- and \emph{inconsequential} decisions which do not (Figure~\ref{fig:maze} (a) and (b)).

More formally, assume that the language model is deployed in a reinforcement learning setting like in \emph{reinforcement learning from human feedback} \cite{ouyang2022training} (states are prompts followed by the partial sequence of tokens $x_{t:1}$ generated so far, actions are single tokens  $x_{t+1}$ to generate, rewards are external $R(x_{t:1})$). The quantity
\[
V_\pi(x_{t:1}) = \mathbb{E}_{x_{t+i} \sim \pi(x_{t+i-1:1}), i \geq 1} \left[ \sum_{i \geq 0} R(x_{t+i:1}) \right]
\]
is the value of the state $x_{t:1}$ following the policy $\pi$, while
\[
\sigma_\pi(x_{t:1}) = \sqrt{\Var_{x_{t+1} \sim \pi(x_{t:1})} \left[ 
V_\pi(x_{t+1:1}) \right] }
\]
quantifies the importance of the decision $x_{t+1}$ on the value thereafter. \emph{Choice points} can formally be viewed as steps $t$ for which $\sigma_\pi(x_{t:1})$ is large, while \emph{inconsequential points} are steps where it is low. Note that for completion models, there is no explicit reward, and our argument is merely meant to illustrate what we mean by \emph{choice points}.

\emph{Derailing} denotes a situation where autoregressive generation of trajectories from $M$ at inference time results in bad outcomes after $M$ made a mistake on a choice point. Even if subsequently, $M$ acts optimally given this choice, the final outcome can be significantly worse than the outcome of the optimal trajectory.

Staying in the teacher-forced setting, we ask: What is the impact of training $M$ with $n$-step prediction instead of next-step prediction on this task? Say $x_t \to x_{t+1}$ is a choice point in an optimal trajectory with the suboptimal choice being $x_t \to \tilde{x}_{t+1}$ (Figure~\ref{fig:maze}~(a)). Assume that the trajectories preceding $x_t$ and succeeding $x_{t+1}$ and $\tilde{x}_{t+1}$ consist of inconsequential transitions, the latter denoted by $\tilde{x}_{t+j} \to \tilde{x}_{t+j+1}$. We will compare the losses of a teacher-forced next-step prediction model and a teacher-forced $n$-step prediction model on the partial trajectory $(x_{t-n+1}, \ldots x_t)$. For the next-step prediction model, the predictions are $(x_{t-n+2}, \ldots, x_t, \tilde{x}_{t+1})$ with a single wrong prediction. The predictions of an $n$-step prediction model at time $t - n + i$, $i = 1, \ldots, n$ are $(x_{t-n+i+1}, \ldots, x_t, \tilde{x}_{t+1}, \ldots, \tilde{x}_{t+i})$ with $i$ wrong predictions. In other words, an $n$-step prediction model receives $1 + \ldots + n = \frac{n(n+1)}{2}$ loss terms pertaining to such a choice point and its consequences, while each inconsequential transition (Figure~\ref{fig:maze}~(b)) is only reinforced $n$ times as often as in a next-step prediction model. In other words, choice points receive on average $\frac{n+1}{2}$ times more importance in the loss of $n$-step prediction models than in next-step prediction models.

As argued in Section~\ref{sect:derailing}, we believe that this model captures important features of training and inference with language models: choice points are semantically important turning points in the generated texts, such as the final answer to a question or a specific line of code, while inconsequential decisions can be a choice among synonyms or of variable names in code.

Apart from this training dynamics point of view, we hypothesize that $n$-step prediction also allows the formation of circuits that specifically spot inconsistencies between predictions for earlier and later steps. For instance, if in an early layer of the model, it can be predicted that a decision $x_t \to \tilde{x}_{t+1}$ leads to suboptimal outcomes $\tilde{x}_{t+n}$ (Figure~\ref{fig:maze}~(c)), subsequent layers can reduce the probability of $x_t \to \tilde{x}_{t+1}$ in the model's next-step prediction. Such behaviors also happen in next-step prediction models given enough capacity, but our experiments in Section~\ref{sect:algorithmic} point to the fact that circuits of this kind are formed more easily in multi-step architectures that enforce the required information $\tilde{x}_{t+n}$ to be available to the model when predicting $\tilde{x}_{t+1}$. We believe that this situation appears frequently in natural language and code modelling, for instance where an initial answer to a question contradicts the results of the \emph{chain of thought} brought forward with the intention to justify it.

In more general terms, this situation arises whenever predicting first $\tilde{x}_{n+i}$ for some $ 1 < i \leq n$ and then $\tilde{x}_{n+1}$ based on $\tilde{x}_{n+i}$ is easier than predicting $\tilde{x}_{n+1}$ directly. We discuss this phenomenon of \emph{factorization orders} in the next section and present a specific instance of it that frequently appears in modelling natural language.

\subsection{Factorization orders}
Causal language modelling factorizes probabilities over text sequences $x_t \cdots x_1$ classically as 
\[ P(x_t \cdots x_1) = \prod_{i=1}^t P(x_i \,|\, x_{i-1} \cdots x_1). \]
While moving forward in time is certainly the most natural choice of factorization order, there exist cases where it is suboptimal. In inflectional languages, for instance, agreement between related sentence parts is a frequent pattern with one word directing the grammatical forms of others. Consider the German sentence
\begin{quote}
Wie konnten auch Worte meiner durstenden Seele genügen?\footnote{roughly: \textit{How could words be enough for my thirsty soul?}}
\par\noindent\hfill\textit{Friedrich Hölderlin, Fragment von Hyperion (1793)}
\end{quote}
where "genügen" requires a dative case object and then "Seele" requires the possessive pronoun "mein" to be in female singular dative form "meiner" and the participle "durstend" to be in female singular dative form in weak declination "durstenden" because it follows "meiner". In other words, the factorization order
\begin{quote}
Wie konnten auch Worte $\rightarrow$ genügen $\rightarrow$ Seele $\rightarrow$ meiner $\rightarrow$ durstenden?
\end{quote}
is arguably an easier one for constructing the above sentence. Humans as well as language models therefore have to perform this factorization (which deviates from the causal order in which predictions take place!) within their latent activations, and a $4$-token prediction loss makes this easier as it explicitly encourages models to have all information about the successive 4 tokens in its latent representations.

\newpage
\section{Training hyperparameters}
\begin{table}[H]
    \centering
        \caption{\textbf{Overview of all training hyperparameters used.} We schedule all learning rates with a linear warmup and cosine decay \citep{loshchilov2017sgdr} to a fraction of the peak learning rate which is depicted in the last column (``decay ratio''). All experiments use the Adam \citep{kingma2014adam} optimizer with $\beta_1 = 0.9$, $\beta_2 = 0.95$ and decoupled $L_2$ weight decay \citep{loshchilov2019decoupled} coefficient $0.1$. We clip gradients to a maximal Euclidean norm of $1.0$ in all experiments except CodeContests finetunings, where we use $0.1$ instead. Summarization finetunings correspond to three epochs on all datasets except BigPatent (1 epoch). Byte-level models use the architecture with replicated unembeddings from Appendix~\ref{app:architecture}.}
    \label{tab:hyperparams}
    \resizebox{0.98\textwidth}{!}{
    \begin{tabular}{lrrrrrrr}
        \toprule
        Model & Batch size ($2^{20}$) & Steps & Tokens (B) & Warmup steps & Peak LR & Context length & Decay ratio \\
        \midrule
        \multicolumn{2}{l}{Model scaling (Section~\ref{sect:model-scaling})} \\
        \cmidrule(r{3em}){1-2}
        0.3B & 8 & 10,850 & 91.0 & 1000 & $3 \times 10^{-4}$ & 4096 & 0.03 \\
        0.6B & 8 & 10,850 & 91.0 & 1000 & $3 \times 10^{-4}$ & 4096 & 0.03 \\
        1.3B & 8 & 10,850 & 91.0 & 1000 & $3 \times 10^{-4}$ & 4096 & 0.03 \\
        3B & 8 & 10,850 & 91.0 & 1000 & $3 \times 10^{-4}$ & 4096 & 0.03 \\
        7B & 8 & 25,000 & 209.7 & 2000 & $3 \times 10^{-4}$ & 4096 & 0.03 \\
        13B & 8 & 25,000 & 209.7 & 1000 & $3 \times 10^{-4}$ & 4096 & 0.03 \\
        \midrule
        \multicolumn{2}{l}{Code models (Section~\ref{sect:results})} \\
        \cmidrule(r{3em}){1-2}
        7B 200B & 8 & 25,000 & 209.7 & 2000 & $3 \times 10^{-4}$ & 4096 & 0.03 \\
        7B 500B & 7 & 68,570 & 503.3 & 2000 & $3 \times 10^{-4}$ & 4096 & 0.03 \\
        7B 1T & 7 & 136,240 & 1000.0 & 2000 & $3 \times 10^{-4}$ & 4096 & 0.03 \\
        \midrule
        \multicolumn{2}{l}{Byte-level models (Section~\ref{sect:byte-level})} \\
        \cmidrule(r{2em}){1-2}
        7B 314GB & 12 & 25,000 & 314.6 & 2000 & $3 \times 10^{-4}$ & 8192 & 0.03 \\
        \midrule
        \multicolumn{2}{l}{Language models (Section~\ref{sect:nlp})} \\
        \cmidrule(r{2em}){1-2}
        7B 200B & 8 & 25,000 & 209.7 & 2000 & $3 \times 10^{-4}$ & 4096 & 0.10 \\
        7B 500B & 8 & 60,000 & 503.3 & 2000 & $3 \times 10^{-4}$ & 4096 & 0.10 \\
        \midrule
        \multicolumn{2}{l}{Induction task (Section~\ref{sect:induction})} \\
        \cmidrule(r{3em}){1-2}
        1M -- 1B& 0.25 & 100,000 & 26.2 & 2000 & $10^{-4}$ & 2048 & 0.03 \\
        1M -- 1B (Appendix~\ref{app:induction}) & 0.5 & 50000 & 26.2 & 2000 & $10^{-4}$ & 2048 & 0.03 \\
        \midrule
        \multicolumn{2}{l}{Arithmetic task (Section~\ref{sect:algorithmic})} \\
        \cmidrule(r{3em}){1-2}
        30M & 0.25 & 100,000 & 26.2 & 2000 & $10^{-4}$ & 1024 & 0.03 \\
        100M & 0.25 & 100,000 & 26.2 & 2000 & $10^{-4}$ & 2048 & 0.03 \\
        \midrule
        \multicolumn{2}{l}{Summarization (Section~\ref{sect:nlp})} \\
        \cmidrule(r{3em}){1-2}
        BigPatent & 0.125 & 76,680 & 10.1 & 100 & $3 \times 10^{-5}$ & 4096 & 0.03 \\
        CNN/Dailymail & 0.125 & 7,140 & 0.9 & 100 & $3 \times 10^{-5}$ & 4096 & 0.03 \\
        Multi-News & 0.125 & 3,330 & 0.4 & 100 & $3 \times 10^{-5}$ & 4096 & 0.03 \\
        OrangeSum & 0.125 & 360 & 0.0 & 100 & $3 \times 10^{-5}$ & 4096 & 0.03 \\
        pn-summary & 0.125 & 3,450 & 0.5 & 100 & $3 \times 10^{-5}$ & 4096 & 0.03 \\
        SAMSum & 0.125 & 60 & 0.0 & 100 & $3 \times 10^{-5}$ & 4096 & 0.03 \\
        ThaiSum & 0.125 & 23,640 & 3.1 & 100 & $3 \times 10^{-5}$ & 4096 & 0.03 \\
        WikiSummary & 0.125 & 2,550 & 0.3 & 100 & $3 \times 10^{-5}$ & 4096 & 0.03 \\
        XSum & 0.125 & 2,760 & 0.4 & 100 & $3 \times 10^{-5}$ & 4096 & 0.03 \\
        \midrule
        \multicolumn{2}{l}{CodeContests (Section~\ref{sect:finetuning})} \\
        \cmidrule(r{3em}){1-2}
        7B & 0.25 & 13,000 & 3.6 & 400 & $5 \times 10^{-5}$ & 4096 & 0.004 \\
        \bottomrule
    \end{tabular}}
\end{table}

\begin{table}[!ht]
    \centering
        \caption{\textbf{Overview of model architectures used for scaling analyses.}}
    \label{tab:models}
    \begin{tabular}{lrrr}
        \toprule
        Name & Dimension & Layers & Heads \\
        \midrule
        1M & 128 & 5 & 4 \\
        3M & 256 & 4 & 8 \\
        10M & 384 & 6 & 8 \\
        30M & 512 & 10 & 8 \\
        100M & 768 & 14 & 12 \\
        300M & 1024 & 25 & 16 \\
        1B & 1536 & 36 & 24 \\
        \midrule
        0.3B & 1024 & 18 & 16 \\
        0.6B & 1280 & 27 & 20 \\
        1.3B & 2048 & 24 & 16 \\
        3B & 2560 & 36 & 20 \\
        6.7B (``7B'') & 4096 & 32 & 32 \\
        13B & 5120 & 40 & 40 \\
        \bottomrule
    \end{tabular}
\end{table}

\end{document}